\newtheorem{thm}{Theorem}
\newtheorem{lemma}{Lemma}
\newtheorem{ass}{Assumption}
\def \R {\mathbb{R}}
\def \w {\mathbf{w}}
\def \v {\mathbf{v}}
\def \x {\mathbf{x}}
\def \E {\mathrm{E}}
\def \x {\mathbf{x}}
\def \1 {\mathbf{1}}
\def \z {\mathbf{z}}
\def \y {\mathbf{y}}
\def \g {\mathbf{g}}
\def \wh {\widehat{\w}}
\def \y {\mathbf{y}}
\def \E {\mathrm{E}}
\def \x {\mathbf{x}}
\def \z {\mathbf{z}}
\def \w {\mathbf{w}}
\def \R {\mathbb{R}}
\def \v {\mathbf{v}}
\def \wh {\widehat{\w}}
\def \E {\mathbb{E}}
\title{Adam$^+$: A Stochastic Method with Adaptive Variance Reduction}
\author{Mingrui Liu\\
Boston University\\
\texttt{mingruiliu.ml@gmail.com}
\And
Wei Zhang\\
IBM T. J. Watson Research Center\\
\texttt{weiz@us.ibm.com}
\And
Francesco Orabona\\
Boston University\\
\texttt{francesco@orabona.com}
\And
Tianbao Yang\\
University of Iowa\\
\texttt{tianbao-yang@uiowa.edu}
}
\begin{document}

\maketitle
\vspace*{-0.3in}
\begin{abstract}
\vspace*{-0.1in}
Adam is a widely used stochastic optimization method for deep learning applications. While practitioners prefer Adam because it requires less parameter tuning, its use is problematic from a theoretical point of view since it may not converge.  Variants of Adam have been proposed with provable convergence guarantee, but they tend not be competitive with Adam on the practical performance. In this paper, we propose a new method named Adam$^+$ (pronounced as Adam-plus). Adam$^+$ retains some of the key components of Adam but it also has several noticeable differences: (i) it does not maintain the moving average of second moment estimate but instead computes the moving average of first moment estimate at extrapolated points; (ii) its adaptive step size is formed not by dividing the square root of second moment estimate but instead by dividing the root of the norm of first moment estimate. As a result,  Adam$^+$ requires few parameter tuning, as Adam, but it enjoys a provable convergence guarantee. Our analysis further shows that Adam$^+$ enjoys adaptive variance reduction, i.e., the variance of the stochastic gradient estimator reduces as the algorithm converges, hence enjoying an adaptive convergence. We also propose a more general variant of Adam$^+$ with different adaptive step sizes and establish their fast convergence rate.  
Our empirical studies on various deep learning tasks, including image classification, language modeling, and automatic speech recognition, demonstrate that Adam$^+$ significantly outperforms Adam and achieves comparable performance with best-tuned SGD and momentum SGD.

\end{abstract}
\vspace*{-0.2in}
\section{Introduction}
\vspace*{-0.1in}
Adaptive gradient methods~\citep{duchi2011adaptive,mcmahan2010adaptive,tieleman2012lecture,kingma2014adam,reddi2019convergence} are one of the most important variants of Stochastic Gradient Descent (SGD) in modern machine learning applications. Contrary to SGD, adaptive gradient methods typically require little parameter tuning still retaining the computational efficiency of SGD. One of the most used adaptive methods is Adam~\citep{kingma2014adam}, which is considered by practitioners as the de-facto default optimizer for deep learning frameworks. Adam computes the update for every dimension of the model parameter through a moment estimation, i.e., the estimates of the first and second moments of the gradients. The estimates for first and second moments are updated using exponential moving averages with two different control parameters. These moving averages are the key difference between Adam and previous adaptive gradient methods, such as Adagrad~\citep{duchi2011adaptive}. 

Although Adam exhibits great empirical performance, there still remain many mysteries about its convergence. First, it has been shown that Adam may not converge for some objective functions~\citep{reddi2019convergence,chen2018convergence}. Second,  it is unclear what is the benefit that the moving average brings from theoretical point of view, especially its effect on the convergence rate. Third, it has been empirically observed that adaptive gradient methods can have worse generalization performance than its non-adaptive counterpart (e.g., SGD) on various deep learning tasks due to the coordinate-wise learning rates~\citep{wilson2017marginal}. 

The above issues motivate us to design a new algorithm which achieves the best of both worlds, i.e., provable convergence with benefits from the moving average and enjoying good generalization performance in deep learning. Specifically, we focus on the following optimization problem:
\begin{align*}
	\min_{\w\in\R^d} \ F(\w),
\end{align*}
where we only have access to stochastic gradients of $F$.
Note that $F$ could possibly be nonconvex in $\w$. Due to the non-convexity, our goal is to design a stochastic first-order algorithm to find the $\epsilon$-stationary point, i.e., finding $\w$ such that $\E\left[\|\nabla F(\w)\|\right]\leq \epsilon$, with low iteration complexity.

Our key contribution is the design and analysis of a new stochastic method named Adam$^+$. Adam$^+$ retains some of the key components of Adam but it also has several noticeable differences: (i) it does not maintain the moving average of second moment estimate but instead computes the moving average of first moment estimate at extrapolated points; (ii) its adaptive step size is formed not by dividing the square root of coordinate-wise second moment estimate but instead by dividing the root of the norm of first moment estimate.  These features allow us to establish the adaptive convergence of Adam$^+$.  Different from existing adaptive methods where the adaptive convergence depends on the growth rate of stochastic gradients~\citep{duchi2011adaptive,mcmahan2010adaptive,kingma2014adam,luo2019adaptive,reddi2019convergence,chen2018closing,chen2018convergence,chen2018universal}, our adaptive convergence is due to the adaptive variance reduction property of our first order moment estimate.  
In existing literature, the variance reduction is usually achieved by large mini-batch~\citep{goyal2017accurate} or recursive variance reduction~\citep{fang2018spider,zhou2018stochastic,pham2020proxsarah,cutkosky2019momentum}. In contrast, we do not necessarily require large minibatch or computing stochastic gradients at two points per-iteration to achieve the variance reduction. In addition, we also establish a fast rate that matches the state-of-the-art complexity under the same conditions of a variant of Adam$^+$.   
Table~\ref{table:11} provides an overview of our results and a summary of existing results. There is another line of work focusing on designing algorithms without the knowledge of some hyperparameters but with same theoretical convergence guarantees as if these hyparameters were known in advance~\citep{li2019convergence,ward2019adagrad}. These schemes are usually referred to as algorithms with "adaptive stepsize". For example, the algorithms in~\citep{li2019convergence,ward2019adagrad} do not require the knowledge of the noise level in the stochastic gradient, and the Adagrad-Norm algorithm in~\citep{ward2019adagrad} does not need to know the smoothness parameter of the objective function. We refer readers to Section~\ref{sec:relatedwork} for a comprehensive survey of other related work.
We further corroborate our theoretical results with an extensive empirical study on various deep learning tasks.

\begin{table}[t]
\caption{Summary of different algorithms with different assumptions and complexity results for finding an $\epsilon$-stationary point. ``Individual Smooth'' means assuming that $F(\w)=\E_{\xi\sim\mathcal{D}}[f(\w; \xi)]$ and that every component function $f(\w; \xi)$ is $L$-smooth. ``Hessian Lipschitz'' means that $\|\nabla^2F(\x)-\nabla^2 F(\y)\|\leq L_H\|\x-\y\|$ holds for $\x,\y$ and $L_H\geq 0$. ``Type I'' means that the complexity depends on $\E[\sum_{i=1}^{T}\|\g_{1:T,i}\|]$, where $\g_{1:T, i}$ stands for the $i$-th row of the matrix $[\g_1, \dots, \g_T]$ with $\g_t$ being the stochastic gradient at $t$-th iteration and $T$ being the number of iterations. ``Type II'' means that complexity depends on $\E[\sum_{t=1}^{T}\|\z_t\|]$, where $\z_t$ is the variance-reduced gradient estimator at $t$-th iteration.}
\label{table:11}
 	\resizebox{\textwidth}{!}{	
	\begin{tabular}{|c|c|c|c|c|}
		\hline
		Algorithm                  & Individual Smooth & Hessian Lipschitz & Worst-case Complexity better than $O(\epsilon^{-4})?$                 & Data-dependent Complexity \\\hline
		\makecell{Generalized Adam~\citep{chen2018convergence}\\PAdam~\citep{chen2018closing}\\ Stagewise Adagrad~\citep{chen2018universal}\\
	} & No & No & No & Type I\\\hline
		\makecell{SPIDER~\citep{fang2018spider}\\ STORM~\citep{cutkosky2019momentum}\\  SNVRG~\citep{zhou2018stochastic}\\	Prox-SARAH~\citep{pham2020proxsarah}}
	 & Yes                   & No                & Yes                   & N/A                       \\\hline
			\makecell{SGD~\citep{fang2019sharp} \\ Normalized momentum SGD\\\citep{cutkosky2020momentum}}                       & No                    & Yes               & Yes & N/A                       \\\hline
	\makecell{Adam$^+$ (this work)}  & No & Yes & Yes & Type II \\\hline
	\end{tabular}}
\end{table}

Our contributions are summarized below.
\begin{itemize}
	\item We propose a new algorithm with adaptive step size, namely Adam$^+$, for general nonconvex optimization. We show that it enjoys a new type of data-dependent adaptive convergence that depends on the variance reduction property of first moment estimate. Notably, this data-dependent complexity does not require the presence of sparsity in stochastic gradients to guarantee fast convergence as in previous works~\citep{duchi2011adaptive,kingma2014adam,reddi2019convergence,chen2018universal,chen2018closing}. To the best of our knowledge, this is the first work establishing such new type of data-dependent complexity.
	\item We show that a general variant of our algorithm can achieve $O(\epsilon^{-3.5})$ worst-case complexity, which matches the state-of-the-art complexity guarantee under the Hessian Lipschitz assumption~\citep{cutkosky2020momentum}.
	\item We demonstrate the effectiveness of our algorithms on image classification, language modeling, and automatic speech recognition. Our empirical results show that our proposed algorithm consistently outperforms Adam on all tasks, and it achieves comparable performance with the best-tuned SGD and momentum SGD.
\end{itemize}
\vspace*{-0.15in}
\section{Algorithm and Theoretical Analysis}
\vspace*{-0.15in}
\begin{algorithm}[t]
	\caption{Adam$^+$: Good default settings for the tested machine learning problems are $\alpha = 0.1, a= 1$,
$\beta=0.1, \epsilon_0 = 10^{-8}$.}
	\label{Alg:1}
	\begin{algorithmic}[1]
		\STATE {\bf Require:}  $\alpha, a\geq 1$: stepsize parameters
		\STATE {\bf Require:}  $\beta\in(0,1)$: Exponential decay rates for the moment estimate
	   \STATE {\bf Require:}  $g_t(\w)$: unbiased stochastic gradient with parameters $\w$ at iteration $t$
	    \STATE {\bf Require:}  $\w_0$: Initial parameter vector
		\STATE $\z_0  = g_0(\w_0)$
		\FOR{$t=0,\ldots,T$}
		\STATE Set $\eta_t = \frac{\alpha \beta^{a}}{\max(\|\z_t\|^{1/2}, \epsilon_0)}$
		\STATE $\w_{t+1}   =\w_t - \eta_t\z_t$\label{line:1}
		\STATE $\wh_{t+1}  = (1-1/\beta)\w_t  + 1/\beta \cdot \w_{t+1}$\label{line:2}
		\STATE $\z_{t+1}  = (1-\beta)\z_t + \beta g_{t+1}(\wh_{t+1}) $\label{line:3}
		\ENDFOR
	\end{algorithmic}
\end{algorithm}
In this section, we introduce our algorithm Adam$^{+}$ (presented in Algorithm~\ref{Alg:1}) and establish its convergence guarantees. Adam$^+$ resembles Adam in several aspects but also has noticeable differences. Similar to Adam, Adam$^+$ also maintains an exponential moving average of first moment (i.e., stochastic gradient), which is denoted by $\z_t$, and uses it for updating the solution in line 8. However, the difference is that the stochastic gradient is evaluated on an extrapolated data point $\wh_{t+1}$, which is an extrapolation of two previous updates $\w_t$ and $\w_{t+1}$.  Similar to Adam,  Adam$^+$ also uses an adaptive step size that is proportional to $1/\|\z_t\|^{1/2}$. Nonetheless, the difference lies at its adaptive step size is directly computed from the square root of the norm of first moment estimate $\z_t$. In contrast, Adam uses an adaptive step size that is proportional to $1/\sqrt{\mathbf v_t}$, where $\v_t$ is an exponential moving average of second moment estimate. These two key components of Adam$^+$, i.e., extrapolation and adaptive step size from the root norm of the first moment estimate, make it enjoy two noticeable benefits: variance reduction of first moment estimate and adaptive convergence. We shall explain these two benefits later. 

Before moving to the theoretical analysis, we would like to make some remarks. First, it is worth mentioning that the moving average estimate with extrapolation is inspired by the literature of stochastic compositional optimization~\citep{wang2017stochastic}.
\citet{wang2017stochastic} showed that the extrapolation helps balancing the noise in the gradients, reducing the bias in the estimates and giving a faster convergence rate.
Here, our focus and analysis techniques are quite different. In fact, \cite{wang2017stochastic} focuses on the compositional optimization while we consider a general nonconvex optimization setting. Moreover, the analysis in~\citep{wang2017stochastic} mainly deals with the error of the gradient estimator caused by the compositional nature of the problem, while our analysis focuses on carefully designing adaptive normalization to obtain an adaptive and fast convergence rates. A similar extrapolation scheme has been also employed in the algorithm NIGT by \citet{cutkosky2020momentum}.
In later sections, we will also provide a more general variant of Adam$^+$ which subsumes NIGT as a special case.

Another important remark is that the update of Adam$^+$ is very different from the famous Nesterov's momentum method. In Nesterov's momentum method, the update of $\w_{t+1}$ uses the stochastic gradient at an extrapolated point $\wh_{t+1} = \w_{t+1} + \gamma(\w_{t+1} - \w_t)$ with a momentum parameter $\gamma\in (0,1)$. In contrast, in Adam$^+$ the update of $\w_{t+1}$ is using the moving average estimate at an extrapolated point $\wh_{t+1}= \w_{t+1}+ (1/\beta -1)(\w_{t+1} - \w_t)$. Finally, Adam$^{+}$ does not employ coordinate-wise learning rates as in Adam, and hence it is expected to have better generalization performance according to~\citet{wilson2017marginal}. 

\subsection{Adaptive Variance Reduction and Adaptive Convergence}
In this subsection, we analyze Adam$^+$ by showing its variance reduction property and adaptive convergence. To this end, we make the following assumptions. 
\begin{ass}
	\label{ass:1} There exists positive constants  $L, \Delta, L_H, \sigma$ and an initial solution $\w_0$ such that 
	\begin{itemize}
	\item[(i)] $F$ is $L$-smooth, i.e., $\left\| \nabla F(\x)-\nabla F(\y)\right\|\leq L\left\|\x-\y\right\|, \ \forall\x,\y\in\R^d$.
	\item [(ii)] For $\forall \x\in\R^d$, we have access to a first-order stochastic oracle at time $t$ $g_t(\x)$ such that $\E\left[g_t(\x)\right]=\nabla F(\x)$, $\E\left\|g_t(\x)-\nabla F(\x)\right\|^2 \leq \sigma^2$.
	\item [(iii)] $\nabla F$ is a $L_H$-smooth mapping, i.e., $\|\nabla ^2 F(\x)-\nabla^2 F(\y)\|\leq L_H\|\x-\y\|, \forall\x,\y\in\R^d$. 
		\item [(iv)]  $F(\w_0) - F_*\leq \Delta<\infty$, where $F_*=\inf_{\w\in\R^d} F(\w)$.

\end{itemize}
\end{ass}
\textbf{Remark}: Assumption~\ref{ass:1} (i) and (ii), (iv) are standard assumptions made in literature of stochastic non-convex optimization~\citep{ghadimi2013stochastic}.   Assumption (iii) is the assumption that deviates from typical analysis of stochastic methods. We leverage this assumption to explore the benefit of moving average, extrapolation and adaptive normalization. It is also used in some previous works for establishing fast rate of stochastic first-order methods for nonconvex optimization~\citep{fang2019sharp,cutkosky2020momentum} and this assumption is essential to get fast rate due to the hardness result in \citep{arjevani2019lower}. It is also the key assumption for finding a local minimum in previous works~\citep{carmon2018accelerated,agarwal2017finding,jin2017escape}.  
 
We might also assume that the stochastic gradient estimator in Algorithm~\ref{Alg:1} satisfies the following variance property.
\begin{ass}
	\label{ass:2}
	Assume that $\E[\|g_0(\w_0) - \nabla F(\w_0)\|^2]\leq \sigma_0^2$
	and $\E[\|g_t(\w_t) - \nabla F(\w_t)\|^2]\leq \sigma_m^2, t\geq 1$. 
\end{ass}
{\bf Remark: } When $g_0$ (resp. $g_t$) is implemented by a mini-batch stochastic gradient with mini-batch size $S$, then $\sigma_0^2$ (resp. $\sigma_m^2$) can be set as $\sigma^2/S$ by Assumption~\ref{ass:1} (ii).  We differentiate the initial variance and intermediate variance because they contribute differently to the convergence. 

We first introduce a lemma to characterize the variance of the moving average gradient estimator $\z_t$. 
\begin{lemma}
	\label{lemma:1}
	Suppose Assumption~\ref{ass:1} and Assumption \ref{ass:2} hold and $a\geq 1$. Then,
	there exists a sequence of random variables $\delta_t$ satisfying $\|\z_t - \nabla F(\w_t)\|\leq \delta_t$ for $\forall t\geq 0$, 
	\begin{align*}
		\E\left[\delta_{t+1}^2\right] &\leq \left(1- \frac{\beta}{2}\right)\E\left[\delta_t^2\right] +2\beta^2 \sigma_m^2 + \E\left[\frac{CL^2_H\|\w_{t+1} - \w_t\|^4}{\beta^3}\right]\\
		&\leq \left(1- \frac{\beta}{2}\right)\E\left[\delta_t^2\right] +2\beta^2 \sigma_m^2 + \E\left[CL_H^2\alpha^4\beta^{4a-3}\|\z_t\|^2\right], 
	\end{align*}
where $C=1944$.
\end{lemma}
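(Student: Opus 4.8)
The plan is to take $\delta_t:=\|\z_t-\nabla F(\w_t)\|$, so that $\|\z_t-\nabla F(\w_t)\|\le\delta_t$ holds automatically and the only content is the recursion for $\E[\delta_t^2]$; the base case is $\E[\delta_0^2]=\E\|g_0(\w_0)-\nabla F(\w_0)\|^2\le\sigma_0^2$ by Assumption~\ref{ass:2}. First I would unfold Algorithm~\ref{Alg:1}: subtracting $\nabla F(\w_{t+1})$ from line~\ref{line:3} and adding and subtracting $(1-\beta)\nabla F(\w_t)+\beta\nabla F(\wh_{t+1})$, with $\xi_{t+1}:=g_{t+1}(\wh_{t+1})-\nabla F(\wh_{t+1})$, yields
\[
\z_{t+1}-\nabla F(\w_{t+1})=(1-\beta)\bigl(\z_t-\nabla F(\w_t)\bigr)+\beta\,\xi_{t+1}-\tau_t,\qquad \tau_t:=\nabla F(\w_{t+1})-(1-\beta)\nabla F(\w_t)-\beta\,\nabla F(\wh_{t+1}).
\]
With $\mathcal{F}_t:=\sigma(g_0,\dots,g_t)$, the points $\w_{t+1}$ and $\wh_{t+1}$ are $\mathcal{F}_t$-measurable whereas $g_{t+1}$ is a fresh unbiased oracle, so $\E[\xi_{t+1}\mid\mathcal{F}_t]=0$ and $\E[\|\xi_{t+1}\|^2\mid\mathcal{F}_t]\le\sigma_m^2$ (Assumptions~\ref{ass:1}(ii) and~\ref{ass:2}).

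The crux is the drift $\tau_t$, and this is exactly where the extrapolation enters: line~\ref{line:2} is designed so that $\w_{t+1}=(1-\beta)\w_t+\beta\,\wh_{t+1}$ with $\wh_{t+1}-\w_t=\tfrac1\beta(\w_{t+1}-\w_t)$. By Taylor's theorem around $\w_t$, the constant and Hessian terms in $\tau_t$ cancel thanks to $1-(1-\beta)-\beta=0$ and $\w_{t+1}-\w_t=\beta(\wh_{t+1}-\w_t)$, so $\tau_t$ reduces to a combination $R_1-\beta R_2$ of two second-order Taylor remainders, which Assumption~\ref{ass:1}(iii) bounds by $\tfrac{L_H}{2}\|\w_{t+1}-\w_t\|^2$ and $\tfrac{L_H}{2}\|\wh_{t+1}-\w_t\|^2$; hence
\[
\|\tau_t\|\le\tfrac{L_H}{2}\|\w_{t+1}-\w_t\|^2+\beta\cdot\tfrac{L_H}{2}\|\wh_{t+1}-\w_t\|^2=\tfrac{L_H}{2}\Bigl(1+\tfrac1\beta\Bigr)\|\w_{t+1}-\w_t\|^2\le\tfrac{L_H}{\beta}\|\w_{t+1}-\w_t\|^2 .
\]
The $\tfrac1{\beta^2}$ blow-up in $\|\wh_{t+1}-\w_t\|^2$ is damped to $\tfrac1\beta$ by the weight $\beta$ on $\nabla F(\wh_{t+1})$; this, together with another $\tfrac1\beta$ coming from Young's inequality below, is the origin of the $\beta^3$ in the denominator.

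Then I would take $\E[\,\cdot\mid\mathcal{F}_t]$ of the squared error identity; the martingale-difference cross term with $\xi_{t+1}$ vanishes, giving
\[
\E\bigl[\delta_{t+1}^2\mid\mathcal{F}_t\bigr]=\bigl\|(1-\beta)(\z_t-\nabla F(\w_t))-\tau_t\bigr\|^2+\beta^2\,\E[\|\xi_{t+1}\|^2\mid\mathcal{F}_t].
\]
Applying $\|u-v\|^2\le(1+\tfrac\beta2)\|u\|^2+(1+\tfrac2\beta)\|v\|^2$ with $u=(1-\beta)(\z_t-\nabla F(\w_t))$, using $(1+\tfrac\beta2)(1-\beta)^2\le1-\tfrac\beta2$ and $1+\tfrac2\beta\le\tfrac3\beta$ for $\beta\in(0,1)$, together with the bound on $\|\tau_t\|$ and $\E[\|\xi_{t+1}\|^2\mid\mathcal{F}_t]\le\sigma_m^2$, and then taking total expectation, gives the first inequality of the lemma with an absolute constant $C$ (the stated $C=1944$ and the factor $2$ in front of $\beta^2\sigma_m^2$ are merely non-optimal constants from cruder bookkeeping). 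The second inequality follows by substituting the step size: $\eta_t=\frac{\alpha\beta^{a}}{\max(\|\z_t\|^{1/2},\epsilon_0)}\le\frac{\alpha\beta^{a}}{\|\z_t\|^{1/2}}$, so $\|\w_{t+1}-\w_t\|=\eta_t\|\z_t\|\le\alpha\beta^{a}\|\z_t\|^{1/2}$ by line~\ref{line:1}, hence $\|\w_{t+1}-\w_t\|^4\le\alpha^4\beta^{4a}\|\z_t\|^2$ and $\frac{CL_H^2\|\w_{t+1}-\w_t\|^4}{\beta^3}\le CL_H^2\alpha^4\beta^{4a-3}\|\z_t\|^2$.

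The main obstacle is the drift bound: a plain triangle inequality only gives $\|\tau_t\|\lesssim L\bigl(\|\w_{t+1}-\w_t\|+\|\wh_{t+1}-\w_t\|\bigr)$, which is first order in the step and far too large to let the recursion contract. Making $\tau_t$ genuinely second order requires both Assumption~\ref{ass:1}(iii) and the precise convex-combination identity $\w_{t+1}=(1-\beta)\w_t+\beta\,\wh_{t+1}$ built into line~\ref{line:2}; after that, the remaining care is purely in tracking the powers of $\beta$ (one $\tfrac1\beta$ from $\wh_{t+1}-\w_t$, one from Young's inequality) so as to land on the claimed $\beta^3$ denominator rather than a larger power.
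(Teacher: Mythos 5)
Your proof is correct, but it follows a genuinely different route from the paper. You set $\delta_t=\|\z_t-\nabla F(\w_t)\|$ itself and derive a one-step recursion: writing $\z_{t+1}-\nabla F(\w_{t+1})=(1-\beta)(\z_t-\nabla F(\w_t))+\beta\xi_{t+1}-\tau_t$, you kill the first-order part of the drift $\tau_t$ by a second-order Taylor expansion around $\w_t$, using the exact identity $\w_{t+1}-\w_t=\beta(\wh_{t+1}-\w_t)$ built into the extrapolation step, so that only Hessian-Lipschitz remainders survive and $\|\tau_t\|\leq \frac{L_H}{\beta}\|\w_{t+1}-\w_t\|^2$; the martingale cross term vanishes by conditioning on $\F_t$, and Young's inequality with $(1+\beta/2)(1-\beta)^2\leq 1-\beta/2$ gives the contraction. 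This is essentially the NIGT/IGT-style bias-cancellation argument, and it even yields sharper constants (roughly $3$ in place of $C=1944$ and $\beta^2\sigma_m^2$ in place of $2\beta^2\sigma_m^2$), which of course implies the stated bound. The paper instead imports the compositional-optimization machinery of \citet{wang2017stochastic} (their Lemmas 10--12): it tracks the exponentially weighted history quantities $m_t=\sum_k\zeta_k^{(t-1)}\|\w_t-\wh_{k+1}\|^2$, $q_t$, and the weighted noise $n_t$, uses that $\w_t$ is the $\zeta$-weighted average of all past extrapolated points, and defines $\delta_t^2=2L_H^2(m_t+4q_t^2)^2+2\|n_t\|^2$ as an upper bound on the error rather than the error itself. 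What the paper's route buys is reuse of off-the-shelf lemmas and a construction of $\delta_t$ that transfers verbatim to the $3/2$-power variant (its Lemma~\ref{lem1:2/3}) by changing exponents; what your route buys is a shorter, self-contained argument with better constants and a transparent explanation of why the extrapolation coefficient $1/\beta$ is exactly what cancels the Hessian term. Two small points to make explicit if you write it up: the conditional statements $\E[\xi_{t+1}\mid\F_t]=0$ and $\E[\|\xi_{t+1}\|^2\mid\F_t]\leq\sigma_m^2$ require the time-$(t+1)$ oracle to be independent of the past and the variance bound to hold at the random point $\wh_{t+1}$ (the same implicit strengthening of Assumption~\ref{ass:2} that the paper uses), and Assumption~\ref{ass:1}(iii) is what licenses the Taylor expansion with remainder $\frac{L_H}{2}\|\cdot\|^2$.
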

\textbf{Remark}:  Note that $\delta_t$ is an upper bound of $\|\z_t-\nabla F(\w_t)\|$, the above lemma can be used to illustrate  the variance reduction effect for the gradient estimator $\z_t$. To this end, we can bound $\|\z_t\|^2\leq 2\delta_t^2 + 2\|\nabla F(\w_t)\|^2$, then the term $CL^2\alpha^4\beta^{4a-3}\delta_t^2$ can be canceled with $-\beta/4\delta_t^2$ with small enough $\alpha$. Hence, we have $\E\delta_{t+1}^2\leq (1-\beta/4)\E[\delta_t^2]+2\beta^2\sigma_m^2 + c\E[\|\nabla F(\w_t)\|^2]$ with a small constant $c$. As the algorithm converges with $\E[\|\nabla F(\w_t)\|^2]$ and $\beta$ decreases to zero, the variance of $\z_t$ will also decrease. Indeed, the above recursion of $\z_t$'s variance resembles that of the recursive variance reduced gradient estimators (e.g., SPIDER~\citep{fang2018spider}, STORM~\citep{cutkosky2019momentum}).
The benefit of using Adam$^+$ is that we do not need to compute stochastic gradient twice at each iteration. 



We can now state our convergence rates for Algorithm~\ref{Alg:1}.
\begin{thm}
	\label{thm1}
	Suppose Assumption~\ref{ass:1}  and Assumption \ref{ass:2} hold.  Suppose $\|\nabla F(\w)\|\leq G$ for any $\w\in\R^d$. By choosing the parameters such that  $\alpha^4\leq \frac{1}{36CL_H^2}$, $\alpha\leq \frac{1}{4L}$, $a= 1$ and $\epsilon_0=\beta^a$, we have
\begin{equation}
\label{thm1:eq3}
\begin{aligned}
\frac{1}{T}\sum_{t=1}^{T}\E\left\|\nabla F(\w_t)\right\|^2\leq \frac{G\E\left[\sum_{t=1}^{T}\|\z_t\|\right]}{T}+\frac{\Delta}{\alpha T}+\frac{18\sigma_0^2}{\beta T} + 30\beta \sigma_m^2~. 
\end{aligned}
\end{equation}
In addition, suppose the initial batch size is $T_0$ and the intermediate batch size is $m$, and choose $\beta=T^{-b}$ with $0\leq b\leq 1$, we have
\begin{equation}
\label{thm:eq4}
    \frac{1}{T}\sum_{t=1}^{T}\E\left\|\nabla F(\w_t)\right\|^2\leq \frac{\E\left[G\sum_{t=1}^{T}\|\z_t\|\right]}{T}+\frac{\Delta}{\alpha T}+\frac{18\sigma^2}{T^{1-b} T_0} + \frac{30 \sigma^2}{mT^b}~.
\end{equation}
\end{thm}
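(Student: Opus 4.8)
The plan is to start from the standard descent lemma for $L$-smooth functions applied to the update $\w_{t+1} = \w_t - \eta_t \z_t$, which gives
\begin{align*}
F(\w_{t+1}) \leq F(\w_t) - \eta_t \langle \nabla F(\w_t), \z_t\rangle + \frac{L}{2}\eta_t^2 \|\z_t\|^2~.
\end{align*}
First I would rewrite the inner product using $\langle \nabla F(\w_t), \z_t\rangle = \frac{1}{2}\|\nabla F(\w_t)\|^2 + \frac{1}{2}\|\z_t\|^2 - \frac{1}{2}\|\z_t - \nabla F(\w_t)\|^2$, and then upper bound $\|\z_t - \nabla F(\w_t)\|^2$ by $\delta_t^2$ from Lemma~\ref{lemma:1}. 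With the choice $\eta_t = \alpha\beta^a / \max(\|\z_t\|^{1/2},\epsilon_0)$ and $\epsilon_0 = \beta^a$, we have $\eta_t \leq \alpha\beta^a/\epsilon_0 = \alpha$ always, and also $\eta_t\|\z_t\|^2 \leq \alpha\beta^a\|\z_t\|^{3/2} \leq \alpha\beta^a\|\z_t\| \cdot \|\z_t\|^{1/2}$; more usefully $\eta_t \|\z_t\| \leq \alpha \beta^a \|\z_t\|^{1/2} \leq$ something controllable, and the key point is to extract the term $\alpha\beta^a \|\z_t\|$ which, after telescoping, yields the $\frac{G\E[\sum_t \|\z_t\|]}{T}$ contribution once combined with $\|\nabla F(\w_t)\| \le G$. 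I would reorganize to get, roughly,
\begin{align*}
\frac{\eta_t}{2}\|\nabla F(\w_t)\|^2 \leq F(\w_t) - F(\w_{t+1}) + \frac{\eta_t}{2}\delta_t^2 - \frac{\eta_t}{2}\|\z_t\|^2 + \frac{L}{2}\eta_t^2\|\z_t\|^2~,
\end{align*}
and since $\alpha \leq \frac{1}{4L}$ the last two terms are dominated by $-\frac{\eta_t}{4}\|\z_t\|^2$ (negative, so can be dropped or retained to cancel the variance recursion term).

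The heart of the argument is the potential/Lyapunov function combining $F(\w_t)$ with a multiple of $\delta_t^2$. Using Lemma~\ref{lemma:1} in the form $\E[\delta_{t+1}^2] \leq (1-\beta/2)\E[\delta_t^2] + 2\beta^2\sigma_m^2 + CL_H^2\alpha^4\beta^{4a-3}\E\|\z_t\|^2$, and noting that with $a=1$ the last coefficient is $CL_H^2\alpha^4\beta$, I would add $c\,\delta_t^2/\eta_t$ or rather a term of the form $\lambda \delta_t^2$ (with $\lambda$ chosen of order $1/\alpha$ so that $\lambda\eta_t \asymp 1$) to the descent inequality. Then the $-\frac{\beta}{2}\lambda\E[\delta_t^2]$ coming from the recursion must absorb the $+\frac{\eta_t}{2}\delta_t^2 \le \frac{\alpha}{2}\delta_t^2$ term from the descent step; this needs $\frac{\beta}{2}\lambda \gtrsim \frac{\alpha}{2} \cdot (\text{telescoped scale})$, which is where $\beta$, $\alpha$ and the constants interact. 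Meanwhile the $CL_H^2\alpha^4\beta\|\z_t\|^2$ term in the recursion must be absorbed by the $-\frac{\eta_t}{4}\|\z_t\|^2$ surplus from the descent lemma; bounding $\eta_t \geq \alpha\beta^a/\max(\ldots)$ and using $\alpha^4 \leq \frac{1}{36CL_H^2}$ is exactly what makes this cancellation go through. I would track constants carefully here to land on the stated $18/\beta$ and $30\beta$ coefficients, handling the $t=0$ term separately (this is why $\sigma_0^2$ appears divided by $\beta T$ — the initial $\delta_0^2 \leq \sigma_0^2$ enters the telescoped potential once, weighted by $\lambda \asymp 1/(\alpha\beta) \cdot \alpha = 1/\beta$, modulo constants).

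After summing the per-step inequality from $t=1$ to $T$, telescoping $F(\w_t) - F(\w_{t+1})$ to $F(\w_1) - F(\w_{T+1}) \leq \Delta$ (absorbing the $t=0$ step into the initial-variance bookkeeping), telescoping the potential terms, and dividing by $T$ and by the uniform lower bound $\eta_t \geq \alpha\beta^a/\max(\ldots)$ — actually by keeping $\eta_t$ inside and using $\eta_t \leq \alpha$, $\beta^a=\epsilon_0$ — I would arrive at \eqref{thm1:eq3}. The second display \eqref{thm:eq4} then follows immediately by substituting $\sigma_0^2 = \sigma^2/T_0$, $\sigma_m^2 = \sigma^2/m$ (Assumption~\ref{ass:1}(ii) via mini-batching) and $\beta = T^{-b}$ into \eqref{thm1:eq3}, which turns $\frac{18\sigma_0^2}{\beta T} = \frac{18\sigma^2}{T^{1-b}T_0}$ and $30\beta\sigma_m^2 = \frac{30\sigma^2}{mT^b}$. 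I expect the main obstacle to be the bookkeeping in the Lyapunov combination: choosing the multiplier of $\delta_t^2$ so that simultaneously (a) the descent-lemma leftover $\frac{\eta_t}{2}\delta_t^2$ is killed by the contraction $-\frac{\beta}{2}$ in Lemma~\ref{lemma:1}, (b) the $\|\z_t\|^2$ feedback term is killed by the step-size surplus, and (c) the resulting coefficients are clean — all while the adaptive $\eta_t$ varies with $t$, so one must consistently use the two-sided bound $\alpha\beta^a/\max(\|\z_t\|^{1/2},\epsilon_0) = \eta_t \leq \alpha$ and never lose the $\|\z_t\|$-dependence that ultimately produces the data-dependent $\frac{G\E[\sum_t\|\z_t\|]}{T}$ term.
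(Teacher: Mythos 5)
Your overall architecture (descent lemma plus the recursion of Lemma~\ref{lemma:1}, telescoping, then plugging $\sigma_0^2=\sigma^2/T_0$, $\sigma_m^2=\sigma^2/m$, $\beta=T^{-b}$ to get \eqref{thm:eq4}) is the right family of argument, but two of your absorption steps do not go through as stated, and they are exactly the places where the paper's proof does something different. First, the central mechanism producing the data-dependent term is missing. You keep the gradient term on the left in the $\eta_t$-weighted form $\frac{\eta_t}{2}\|\nabla F(\w_t)\|^2$, telescope $F(\w_t)-F(\w_{t+1})$ to $\Delta$, and then propose to pass to $\frac{1}{T}\sum_t\E\|\nabla F(\w_t)\|^2$ by ``dividing by the uniform lower bound on $\eta_t$'' or ``keeping $\eta_t$ inside and using $\eta_t\le\alpha$.'' But $\eta_t=\alpha\beta/\max(\|\z_t\|^{1/2},\beta)$ has no useful uniform lower bound: it can be arbitrarily small when $\|\z_t\|$ is large, and an upper bound on $\eta_t$ is useless on the left-hand side. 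The paper instead divides the descent inequality by $\eta_t$ \emph{per step}, obtaining $\|\nabla F(\w_t)\|^2\le \frac{4(F(\w_t)-F(\w_{t+1}))}{\eta_t}+3\delta_t^2$ with unit coefficient, and then bounds the non-telescoping sum term by term via $|F(\w_t)-F(\w_{t+1})|\le G\|\w_{t+1}-\w_t\|=G\eta_t\|\z_t\|$ (this is where $\|\nabla F\|\le G$ is used), so that $\frac{F(\w_t)-F(\w_{t+1})}{\eta_t}\le G\|\z_t\|$; this is precisely the origin of $\frac{G\E[\sum_t\|\z_t\|]}{T}$, which your sketch gestures at but never actually derives.

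Second, your plan (b) — absorbing the feedback term of Lemma~\ref{lemma:1} by the $-\frac{\eta_t}{4}\|\z_t\|^2$ surplus of the descent lemma — fails pointwise. With the Lyapunov multiplier $\asymp 1/\beta$ the feedback term is $\asymp CL_H^2\alpha^4\|\z_t\|^2$, while the surplus is only $\frac{\eta_t}{4}\|\z_t\|^2\le\frac{\alpha\beta}{4}\|\z_t\|^{3/2}$, which does not dominate $\|\z_t\|^2$ for large $\|\z_t\|$ (and $\|\z_t\|$ is not almost surely bounded, only $\|\nabla F\|$ is). The condition $\alpha^4\le\frac{1}{36CL_H^2}$ cannot rescue this; in the paper it is used differently: one substitutes $\|\z_t\|^2\le 2\delta_t^2+2\|\nabla F(\w_t)\|^2$ into the recursion, cancels the $\delta_t^2$ part against the $\beta/2$ contraction (using $2CL_H^2\alpha^4\le 1/18$), and carries the leftover $\frac{\beta}{18}\|\nabla F(\w_t)\|^2$ over to be absorbed by the unweighted gradient sum coming from the divided descent inequality. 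So the fix for both gaps is the same structural move — work with the unweighted per-step bound on $\|\nabla F(\w_t)\|^2$ rather than the $\eta_t$-weighted one — and without it your Lyapunov bookkeeping cannot close.
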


\begin{thm}
\label{thm:2}
	Suppose Assumption~\ref{ass:1} and Assumption \ref{ass:2} hold. 
	By choosing parameters such that  $640\alpha^3 L_H^{3/2}\leq 1/120$, $a=1, \epsilon_0=0, \beta=1/T^{s}$ with $s=2/3$
	then it takes 
$T=O\left(\epsilon^{-4.5}\right)$
	number of iterations to ensure that 
	\begin{equation*}
\frac{1}{T}\sum_{t=1}^{T} \E\left[\|\nabla F(\w_t)\|^{3/2}\right]\leq\epsilon^{3/2},\quad \frac{1}{T}\E\bigg[\sum_{t=1}^T\delta_t^{3/2}\bigg]\leq \epsilon^{3/2}~.
	\end{equation*}
\end{thm}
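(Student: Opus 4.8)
\emph{Overview and descent step.} The plan is to pair a descent estimate that uses the normalized form of the update with a moving-average error recursion carried out directly at the $3/2$-power, rather than obtained by taking powers of Lemma~\ref{lemma:1} after the fact. With $\epsilon_0=0$, $a=1$ the step size is $\eta_t=\alpha\beta/\|\z_t\|^{1/2}$, so $\w_{t+1}-\w_t=-\alpha\beta\,\z_t/\|\z_t\|^{1/2}$ and $\|\w_{t+1}-\w_t\|=\alpha\beta\|\z_t\|^{1/2}$. Plugging this into the $L$-smoothness descent inequality and using $\langle\nabla F(\w_t),\z_t\rangle\ge\|\z_t\|^2-\delta_t\|\z_t\|$ gives $F(\w_{t+1})\le F(\w_t)-\alpha\beta\|\z_t\|^{3/2}+\alpha\beta\,\delta_t\|\z_t\|^{1/2}+\tfrac{L\alpha^2\beta^2}{2}\|\z_t\|$. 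Young's inequality with exponents $(3,\tfrac32)$ applied to $\delta_t\|\z_t\|^{1/2}$ and to $\|\z_t\|$ moves a small multiple of $\|\z_t\|^{3/2}$ back to the left (using that $\beta=T^{-2/3}$ is eventually small), so that summing over $t$ and taking expectations yields $\alpha\beta\sum_{t=1}^{T}\E\|\z_t\|^{3/2}\lesssim\Delta+\alpha\beta\sum_{t=1}^{T}\E\delta_t^{3/2}+L\alpha^2\beta^2T$. No bound on $\|\nabla F\|$ is used here, which is why only Assumptions~\ref{ass:1}--\ref{ass:2} are needed.

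\emph{Error recursion at the $3/2$-power.} Write $\z_{t+1}-\nabla F(\w_{t+1})=(1-\beta)(\z_t-\nabla F(\w_t))+b_{t+1}+\beta\xi_{t+1}$, where $\xi_{t+1}=g_{t+1}(\wh_{t+1})-\nabla F(\wh_{t+1})$ is fresh mean-zero noise and $b_{t+1}$ collects the Taylor remainder of $\nabla F$ around $\w_{t+1}$; as in the design of Adam$^+$, the extrapolation $\wh_{t+1}=\w_{t+1}+(1/\beta-1)(\w_{t+1}-\w_t)$ cancels the Hessian-linear terms, so Assumption~\ref{ass:1}(iii) gives $\|b_{t+1}\|\le\frac{L_H}{2\beta}\|\w_{t+1}-\w_t\|^2=\frac{L_H\alpha^2\beta}{2}\|\z_t\|$. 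Conditioning on the history through $\wh_{t+1}$, I would use Jensen, $\E_{t+1}\|\z_{t+1}-\nabla F(\w_{t+1})\|^{3/2}\le(\E_{t+1}\|\z_{t+1}-\nabla F(\w_{t+1})\|^2)^{3/4}$, so the mean-zero cross term disappears inside the second moment, leaving $(\|(1-\beta)(\z_t-\nabla F(\w_t))+b_{t+1}\|^2+\beta^2\sigma^2)^{3/4}$; then subadditivity of $x\mapsto x^{3/4}$ together with the $3/2$-power ``Young for sums'' inequality $((1-\beta)\delta_t+\|b_{t+1}\|)^{3/2}\le(1-\beta)\delta_t^{3/2}+\sqrt2\,\beta^{-1/2}\|b_{t+1}\|^{3/2}$ gives, after taking full expectations,
\begin{equation*}
\E\delta_{t+1}^{3/2}\le(1-\beta)\E\delta_t^{3/2}+\tfrac{L_H^{3/2}\alpha^3\beta}{2}\,\E\|\z_t\|^{3/2}+\beta^{3/2}\sigma^{3/2},
\end{equation*}
with $\delta_t:=\|\z_t-\nabla F(\w_t)\|$ and $\E\delta_0^{3/2}\le\sigma_0^{3/2}$.

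\emph{Combining and tuning $\beta$.} Summing the recursion over $t$ and dividing by the $\Theta(\beta)$ contraction rate gives $\sum_t\E\delta_t^{3/2}\lesssim\sigma_0^{3/2}/\beta+\tfrac{L_H^{3/2}\alpha^3}{2}\sum_t\E\|\z_t\|^{3/2}+\beta^{1/2}\sigma^{3/2}T$. Inserting the descent bound for $\sum_t\E\|\z_t\|^{3/2}$ and using $640\alpha^3L_H^{3/2}\le1/120$ to make the induced coefficient of $\sum_t\E\delta_t^{3/2}$ at most $\tfrac12$, I absorb that term and obtain $\sum_t\E\delta_t^{3/2}\lesssim(\sigma_0^{3/2}+L_H^{3/2}\alpha^2\Delta)/\beta+\beta^{1/2}\sigma^{3/2}T+L_H^{3/2}L\alpha^4\beta T$. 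With $\beta=T^{-2/3}$ every term is $O(T^{2/3})$, hence $\tfrac1T\sum_t\E\delta_t^{3/2}=O(T^{-1/3})$, which is the second claimed bound. Feeding this back into the descent bound gives $\tfrac1T\sum_t\E\|\z_t\|^{3/2}=O(T^{-1/3})$, and $\|\nabla F(\w_t)\|^{3/2}\le\sqrt2(\|\z_t\|^{3/2}+\delta_t^{3/2})$ then gives $\tfrac1T\sum_t\E\|\nabla F(\w_t)\|^{3/2}=O(T^{-1/3})$. Requiring $O(T^{-1/3})\le\epsilon^{3/2}$ forces $T=O(\epsilon^{-9/2})=O(\epsilon^{-4.5})$.

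\emph{Main obstacle.} The delicate part is the error recursion. Using Lemma~\ref{lemma:1} as stated (for the squared error) and converting to the $3/2$-power only at the end via Jensen/H\"older is too lossy: the initial-variance term then contributes $\tfrac1T\sum_t\E\delta_t^{3/2}=O(T^{-1/4})$ and yields only $T=O(\epsilon^{-6})$. One has to keep the $3/2$-power throughout so that (i) the bias term scales as $\beta\|\z_t\|^{3/2}$ — this is precisely what makes the smallness requirement on $\alpha$ $T$-independent and phrased through $L_H^{3/2}$, as in the statement — and (ii) the noise term scales as $\beta^{3/2}\sigma^{3/2}$ rather than $\beta\sigma^{3/2}$, which is what the Jensen step buys. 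Pinning down these two exponents, and checking that every contribution to $\sum_t\E\delta_t^{3/2}$ is $O(T^{2/3})$ when $\beta=T^{-2/3}$, is the crux of the argument.
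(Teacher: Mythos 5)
Your proposal is correct and reaches the same $T=O(\epsilon^{-4.5})$ bound, but it takes a genuinely different route through the two key ingredients. For the error recursion, the paper does not derive a one-step bound directly: it adapts the moving-average machinery of \citet{wang2017stochastic} (the quantities $m_t$, $q_t$, $n_t$ and their Lemmas 10--11), defines $\delta_t$ as an \emph{upper bound} $\bigl(2L_H^{3/2}(m_t+4q_t^2)^{3/2}+2\|n_t\|^{3/2}\bigr)^{2/3}$ on the error, and obtains $\E[\delta_{t+1}^{3/2}]\le(1-\beta/2)\E[\delta_t^{3/2}]+2\beta^{3/2}\sigma^{3/2}+320L_H^{3/2}\E\|\w_{t+1}-\w_t\|^3/\beta^2$ (Lemma~\ref{lem1:2/3}). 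You instead work with $\delta_t=\|\z_t-\nabla F(\w_t)\|$ itself, write the one-step decomposition $(1-\beta)(\z_t-\nabla F(\w_t))+b_{t+1}+\beta\xi_{t+1}$, and use the exact cancellation of the Hessian-linear terms at the extrapolated point to get $\|b_{t+1}\|\le\frac{L_H}{2\beta}\|\w_{t+1}-\w_t\|^2$; together with conditional Jensen ($3/2\to2$ power) and the weighted power inequality $((1-\beta)x+y)^{3/2}\le(1-\beta)x^{3/2}+\beta^{-1/2}y^{3/2}$ this gives the same recursion with better constants and a more self-contained derivation — this is a legitimate and arguably cleaner argument, and it makes explicit \emph{why} the extrapolation yields a bias of order $\beta\|\z_t\|^{3/2}$. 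For the descent step, the paper extracts $-\|\nabla F(\w_t)\|^{3/2}/6+9\delta_t^{3/2}$ directly via a case analysis on $\|\nabla F(\w_t)\|\gtrless2\|\Delta_t\|$ (Lemma~\ref{lem1:1}), then runs the final bookkeeping in Lemmas~\ref{lem:4}--\ref{lemma:5}; you instead extract $\|\z_t\|^{3/2}$ from the descent inequality and convert to $\|\nabla F(\w_t)\|^{3/2}$ only at the end via $\|\nabla F(\w_t)\|^{3/2}\le\sqrt2(\|\z_t\|^{3/2}+\delta_t^{3/2})$; both organizations work and yield the same exponents after setting $\beta=T^{-2/3}$. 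Two harmless imprecisions to tidy up if you write this out: the Young residual from $\frac{L\alpha^2\beta^2}{2}\|\z_t\|$ is $O(L^3\alpha^4\beta^4)$ per step rather than $L\alpha^2\beta^2$ (either way it contributes $O(L\alpha\beta)=O(T^{-2/3})$ after dividing by $\alpha\beta T$, so the rate is unaffected), and the per-iteration noise is the oracle variance at $\wh_{t+1}$, which is covered by Assumption~\ref{ass:1}(ii) exactly as in the paper's Lemma~\ref{lem1:2/3}.
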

\vspace*{-0.1in}
\textbf{Remarks:} 
\vspace*{-0.1in}
\begin{itemize}
\item From Theorem~\ref{thm1}, we can observe that the convergence rate of Adam$^{+}$ crucially depends on the growth rate of $\E\left[\sum_{t=1}^T\|\z_t\|\right]$, which gives a data-dependent adaptive complexity. If $\E\left[\sum_{t=1}^T \|\z_t\|\right]\leq T^{\alpha}$ with $\alpha<1$, then the algorithm converges. Smaller $\alpha$ implies faster convergence. Our goal is to ensure that $\frac{1}{T}\sum_{t=1}^{T}\E\|\nabla F(\w_t)\|^2\leq \epsilon^2$. Choosing $b=1-\alpha$, $m=O(1)$ and $T_0=T^{1-\alpha}=O(\epsilon^{-2})$, and we end up with $T=O\left(\epsilon^{-\frac{2}{1-\alpha}}\right)$ complexity.

\item Theorem~\ref{thm:2} shows that in the ergodic sense, the Algorithm Adam$^+$ always converges, and the variance gets smaller when the number of iteration gets larger. Theorem~\ref{thm:2} rules out the case that the magnitude of $\z_t$ converges to a constant and the bound~(\ref{thm:eq4}) in Theorem~\ref{thm1} becomes vacuous.

\item To compare with Adam-style algorithms (e.g., Adam, AdaGrad), these algorithms' convergence depend on the growth rate of stochastic gradient, i.e., $\sum_{i=1}^d\|\g_{1:T,i}\|/T$, where $\g_{1:T,i}=[g_{1,i}, \ldots, g_{T, i}]$ denotes the $i$-th coordinate of all historical stochastic gradients. Hence, the data determines the growth rate of stochastic gradient. If the stochastic gradients are not sparse, then its growth rate may not be slow and these Adam-style algorithms may suffer from slow convergence. In contrast, for Adam$^+$ the convergence can be accelerated by the variance reduction property. Note that we have $ \E\left[\sum_{t=1}^{T}\|\z_t\|\right]/T\leq \E\left[\sum_{t=1}^{T}(\delta_t + \|\nabla F(\w_t)\|)\right]/T$. Hence, Adam$^+$'s convergence depends on the variance reduction property of $\z_t$. 

\end{itemize}

\subsection{A General Variant of Adam$^+$: Fast Convergence with Large Mini-batch}
Next, we introduce a more general variant of Adam$^+$ by making a simple change.  In particular, we keep all steps the same as in Algorithm~\ref{Alg:1} except the  adaptive step size is now set as $\eta_t =\frac{\alpha\beta^a}{\max\left(\left\|\z_t\right\|^{p}, \epsilon_0\right)}$, where $p\in[1/2,1)$ is parameter. We refer to this general variant of Adam$^+$ as power normalized Adam$^+$ (Nadam$^+$). This generalization allows us to compare with some existing methods and to establish fast convergence rate.  First, we notice that  when setting $p=1$ and $a=5/4$ and $\beta=1/T^{4/7}$,  Nadam$^+$ is almost the same as the stochastic method NIGT~\citep{cutkosky2020momentum} with only some minor differences. However, we observed that normalizing by $\|\z_t\|$ leads to slow convergence in practice, so we are instead interested in $p<1$. Below, we will show that NAdam$^+$ with $p<1$ can achieve a fast rate of $1/\epsilon^{3.5}$, which is the same as NIGT. 

\begin{thm}
	\label{thm3}
	Under the same assumption as in Theorem~\ref{thm1}, further assume $\sigma_0^2=\sigma^2/T_0$ and $\sigma_m^2=\sigma^2/m$. By using the step size $\eta_t=\frac{\alpha\beta^{4/3}}{\max\left(\left\|\z_t\right\|^{2/3}, \epsilon_0\right)}$ in Algorithm~\ref{Alg:1} with  $CL_H^2\alpha^4\leq 1/14$, $\alpha\leq 1/L$, $\epsilon_0=2\beta^{4/3}$, in order to have $\E\left[\|\nabla F(\w_\tau)\|\right]\leq\epsilon$ for a randomly selected solution $\w_{\tau}$ from $\{\w_1, \ldots, \w_T\}$, it suffice to set $\beta=O(\epsilon^{1/2})$, $T=O(\epsilon^{-2})$,  the initial batch size $T_0=1/\beta=O(\epsilon^{-1/2})$, the intermediate batch size as $m=1/\beta^3=O(\epsilon^{-3/2})$, which ends up with the total complexity $O(\epsilon^{-3.5})$. 

\end{thm}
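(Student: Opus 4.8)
The strategy is the same two-track argument used for Theorems~\ref{thm1} and~\ref{thm:2} --- a descent inequality for $F(\w_t)$ together with the error recursion of Lemma~\ref{lemma:1} --- but carried through with the exponent $p=2/3$ (so the numerator exponent is $a=4/3$) and then balanced against the batch sizes $T_0=1/\beta$ and $m=1/\beta^3$. First I would write the descent step: by $L$-smoothness and $\w_{t+1}-\w_t=-\eta_t\z_t$, and using $\langle\nabla F(\w_t),\z_t\rangle=\|\z_t\|^2-\langle\z_t-\nabla F(\w_t),\z_t\rangle\geq\|\z_t\|^2-\delta_t\|\z_t\|$ with the $\delta_t$ of Lemma~\ref{lemma:1},
\begin{align*}
F(\w_{t+1})&\leq F(\w_t)-\eta_t\|\z_t\|^2+\eta_t\delta_t\|\z_t\|+\tfrac{L\eta_t^2}{2}\|\z_t\|^2.
\end{align*}
Because $\epsilon_0=2\beta^{4/3}$ forces $\eta_t\leq\alpha/2\leq\tfrac{1}{2L}$ (using $\alpha\leq 1/L$) while also $\eta_t\leq\alpha\beta^{4/3}\|\z_t\|^{-2/3}$, the last term is at most $\tfrac14\eta_t\|\z_t\|^2$, and a short case split on whether $\|\z_t\|^{2/3}$ exceeds $\epsilon_0$ gives $\eta_t\|\z_t\|^2\geq\alpha\beta^{4/3}\|\z_t\|^{4/3}-O(\alpha\beta^4)$. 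Applying Young's inequality to $\eta_t\delta_t\|\z_t\|$ and summing over $t$ then produces $\alpha\beta^{4/3}\sum_t\E\|\z_t\|^{4/3}\lesssim\Delta+\alpha\sum_t\E\delta_t^2+\alpha\beta^4T$.

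Next I would re-run Lemma~\ref{lemma:1} for this step size. The construction of $\delta_t$ and the first displayed inequality in that lemma only use $\w_{t+1}-\w_t=-\eta_t\z_t$ and the extrapolation identity $\wh_{t+1}-\w_{t+1}=(1/\beta-1)(\w_{t+1}-\w_t)$, so they are unchanged; substituting $\|\w_{t+1}-\w_t\|^4=\eta_t^4\|\z_t\|^4\leq\alpha^4\beta^{16/3}\|\z_t\|^{4/3}$ into it gives
\begin{align*}
\E[\delta_{t+1}^2]\leq\Big(1-\tfrac{\beta}{2}\Big)\E[\delta_t^2]+2\beta^2\sigma_m^2+CL_H^2\alpha^4\beta^{7/3}\,\E[\|\z_t\|^{4/3}].
\end{align*}
Summing this geometric recursion with $\E[\delta_0^2]\leq\sigma_0^2$ yields $\sum_t\E\delta_t^2\lesssim\sigma_0^2/\beta+T\beta\sigma_m^2+CL_H^2\alpha^4\beta^{4/3}\sum_t\E\|\z_t\|^{4/3}$.

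Then I would decouple the two sums: substituting the descent bound for $\sum_t\E\|\z_t\|^{4/3}$ into this estimate leaves a self-feedback term on $\sum_t\E\delta_t^2$ whose coefficient is a multiple of $CL_H^2\alpha^4$, hence $\leq 1/2$ by the hypothesis $CL_H^2\alpha^4\leq 1/14$, so it is absorbed and $\sum_t\E\delta_t^2\lesssim\sigma_0^2/\beta+T\beta\sigma_m^2+CL_H^2\alpha^3\Delta+CL_H^2\alpha^4\beta^4T$. With $\sigma_0^2=\sigma^2/T_0=\sigma^2\beta$, $\sigma_m^2=\sigma^2/m=\sigma^2\beta^3$, $\beta=\Theta(\epsilon^{1/2})$ and $T=\Theta(\epsilon^{-2})$, each term on the right is $O(1)$ (note $\sigma_0^2/\beta=\sigma^2$, $T\beta\sigma_m^2=\sigma^2T\beta^4=\Theta(\sigma^2)$, $\beta^4T=\Theta(1)$), so $\tfrac1T\sum_t\E\delta_t^2=O(\epsilon^2)$; feeding this back into the descent bound gives $\tfrac1T\sum_t\E\|\z_t\|^{4/3}=O(\epsilon^2/\beta^{4/3})=O(\epsilon^{4/3})$. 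Finally, $\|\nabla F(\w_t)\|\leq\|\z_t\|+\delta_t$ and convexity of $x\mapsto x^{4/3}$ give $\|\nabla F(\w_t)\|^{4/3}\lesssim\|\z_t\|^{4/3}+\delta_t^{4/3}$; two applications of Jensen ($\E\delta_t^{4/3}\leq(\E\delta_t^2)^{2/3}$, then concavity of $x^{2/3}$ on the average) bound $\tfrac1T\sum_t\E\delta_t^{4/3}$ by $(\tfrac1T\sum_t\E\delta_t^2)^{2/3}=O(\epsilon^{4/3})$, so $\tfrac1T\sum_t\E\|\nabla F(\w_t)\|^{4/3}=O(\epsilon^{4/3})$; for $\w_\tau$ uniform on $\{\w_1,\dots,\w_T\}$, one more Jensen gives $\E\|\nabla F(\w_\tau)\|\leq(\E\|\nabla F(\w_\tau)\|^{4/3})^{3/4}=O(\epsilon)$, which is $\leq\epsilon$ after absorbing constants into $\beta=O(\epsilon^{1/2})$. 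The oracle cost is $T_0+Tm=O(\epsilon^{-1/2})+O(\epsilon^{-2}\cdot\epsilon^{-3/2})=O(\epsilon^{-3.5})$.

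I expect the decoupling step to be the main obstacle: one must simultaneously keep the mutual feedback between $\sum_t\E\|\z_t\|^{4/3}$ and $\sum_t\E\delta_t^2$ below the $1/2$ threshold --- which is exactly what $CL_H^2\alpha^4\leq 1/14$ secures --- and verify that every residual term ($\sigma_0^2/\beta$, $T\beta\sigma_m^2$, $CL_H^2\alpha^4\beta^4T$) is genuinely $O(1)$ under the precise scaling $\beta=\Theta(\epsilon^{1/2})$, $T=\Theta(\epsilon^{-2})$, $T_0=\Theta(\epsilon^{-1/2})$, $m=\Theta(\epsilon^{-3/2})$. The exponent bookkeeping has to line up (e.g.\ $\|\w_{t+1}-\w_t\|^4/\beta^3\propto\beta^{7/3}\|\z_t\|^{4/3}$ when $p=2/3$, and this times the $\beta^{4/3}$ left over from the descent bound must be $O(1)$ after summation) for the claimed rate to close; dealing with the $\epsilon_0$-truncation in the descent step, and converting the $\|\nabla F\|^{4/3}$-average into the required $\E\|\nabla F(\w_\tau)\|$ bound, are minor secondary points.
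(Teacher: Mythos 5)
Your plan is correct and follows essentially the same route as the paper's proof: the same $L$-smoothness descent step with the $\epsilon_0$-truncation case split yielding $\beta^{4/3}\|\z_t\|^{4/3}\lesssim F(\w_t)-F(\w_{t+1})+\|\z_t-\nabla F(\w_t)\|^2+\beta^{4}$, the same Lemma~\ref{lemma:1} recursion with the $CL_H^2\alpha^4\beta^{7/3}\|\z_t\|^{4/3}$ term, and the same parameter balancing ($\beta=\Theta(\epsilon^{1/2})$, $T=\Theta(\epsilon^{-2})$, $T_0=1/\beta$, $m=1/\beta^3$) giving $\frac{1}{T}\sum_{t}\E\|\nabla F(\w_t)\|^{4/3}=O(\epsilon^{4/3})$, a Jensen step for $\E\|\nabla F(\w_\tau)\|\leq\epsilon$, and total cost $T_0+Tm=O(\epsilon^{-3.5})$. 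The only cosmetic differences are that you absorb the mutual feedback into the $\sum_t\E[\delta_t^2]$ recursion (checking the coefficient stays below $1$ via $CL_H^2\alpha^4\leq 1/14$) whereas the paper substitutes in the other direction so that $\sum_t\E\|\z_t\|^{4/3}$ cancels from both sides, and you treat the $\delta_t^{4/3}$ term by Jensen in expectation rather than the paper's pointwise Young inequality; both variants are equivalent.
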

\textbf{Remark:} Note that the above theorem establishes the fast convergence rate for Nadam$^+$ with $p=2/3$. Indeed, we can also establish a fast rate of Adam$^+$ (where $p=1/2$) in the order of $O(1/\epsilon^{3.625})$ with details provided in the Appendix~\ref{sec:newvariant}. 

\section{Experiments}
In this section, we conduct empirical studies to verify the effectiveness of the proposed algorithm on three different tasks: image classification on CIFAR10 and CIFAR100 dataset~\citep{krizhevsky2009cifar}, language modeling on Wiki-Text2 dataset~\citep{merity2016wikitext} and automatic speech recognition on SWB-300 dataset~\citep{saon-interspeech-2017}.
We choose tasks from different domains to demonstrate the applicability for the real-world deep learning tasks in a broad sense. The detailed description is presented in Table~\ref{table:1}. We compare our algorithm Adam$^+$ with SGD, momentum SGD, Adagrad, NIGT and Adam. We choose the same random initialization for each algorithm, and run a fixed number of epochs for every task. For Adam we choose the default setting $\beta_1=0.9$ and $\beta_2=0.999$ as in the original Adam paper.

\begin{table}[ht!]
	\centering
	\caption{Summary of setups in the experiments.}
	\label{table:1}
		\begin{tabular}{ccccc}
			\hline
			\textbf{Domain}	&   \textbf{Task}&\textbf{Architecture}       & \textbf{Dataset}  &    \\\hline
			Computer Vision	&	Image Classification & ResNet18        & CIFAR10  &    \\
			Computer Vision&	Image Classification &	VGG19                    & CIFAR100 &   \\
						Natural Language Processing& Language Modeling	&	 Two-layer LSTM      & Wiki-Text2  &   \\
			Automatic Speech Recognition&	 Speech Recognition &	Six-layer BiLSTM      & SWB-300  &   \\\hline
	\end{tabular}
\end{table}

\subsection{Image Classification}
\paragraph{CIFAR10 and CIFAR100} In the first experiment, we consider training ResNet18~\citep{he2016deep} and VGG19~\citep{simonyan2014very} to do image classification task on CIFAR10 and CIFAR100 dataset respectively.
For every optimizer, we use batch size 128 and run 350 epochs.
For SGD and momentum SGD, we set the initial learning rate to be $0.1$ for the first 150 epochs, and the learning rate is decreased by a factor of $10$ for every 100 epochs. For Adagrad and Adam, the initial learning rate is tuned from $\{0.1, 0.01, 0.001\}$ and we choose the one with the best performance. The best initial learning rates for Adagrad and Adam are $0.01$ and $0.001$ respectively. For NIGT, we tune the their momentum parameter from $\{0.01, 0.1, 0.9\}$ (the best momentum parameter we found is $0.9$) and the learning rate is chosen the same as in SGD. For Adam$^+$, the learning rate is set according to Algorithm~\ref{Alg:1}, in which we choose $\beta=0.1$ and the value of $\alpha$ is the same as the learning rate used in SGD. We report training and test accuracy versus the number of epochs in Figure~\ref{exp:cifar10} for CIFAR10 and Figure~\ref{exp:cifar100} for CIFAR100. 
We observe that our algorithm consistently outperforms all other algorithms on both CIFAR10 and CIFAR100, in terms of both training and testing accuracy. Notably, we have some interesting observations for the training of VGG19 on CIFAR100. First, both Adam$^+$ and NIGT significantly outperform SGD, momentum SGD, Adagrad and Adam.
Second, Adam$^+$ achieves almost the same final accuracy as NIGT, and Adam$^{+}$ converges much faster in the early stage of the training.

\begin{figure}[t]
	\centering
	\includegraphics[scale=0.3]{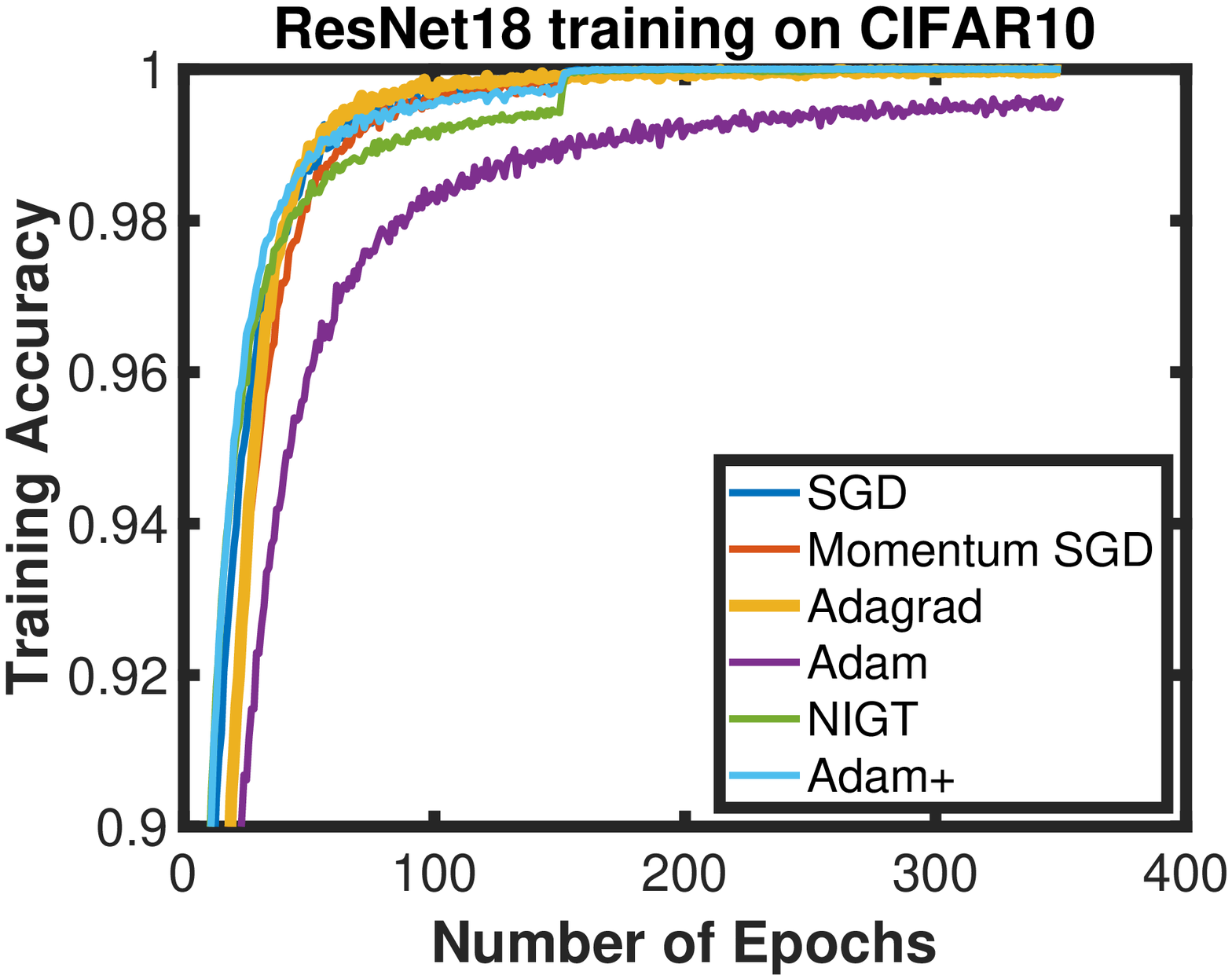}
	\includegraphics[scale=0.3]{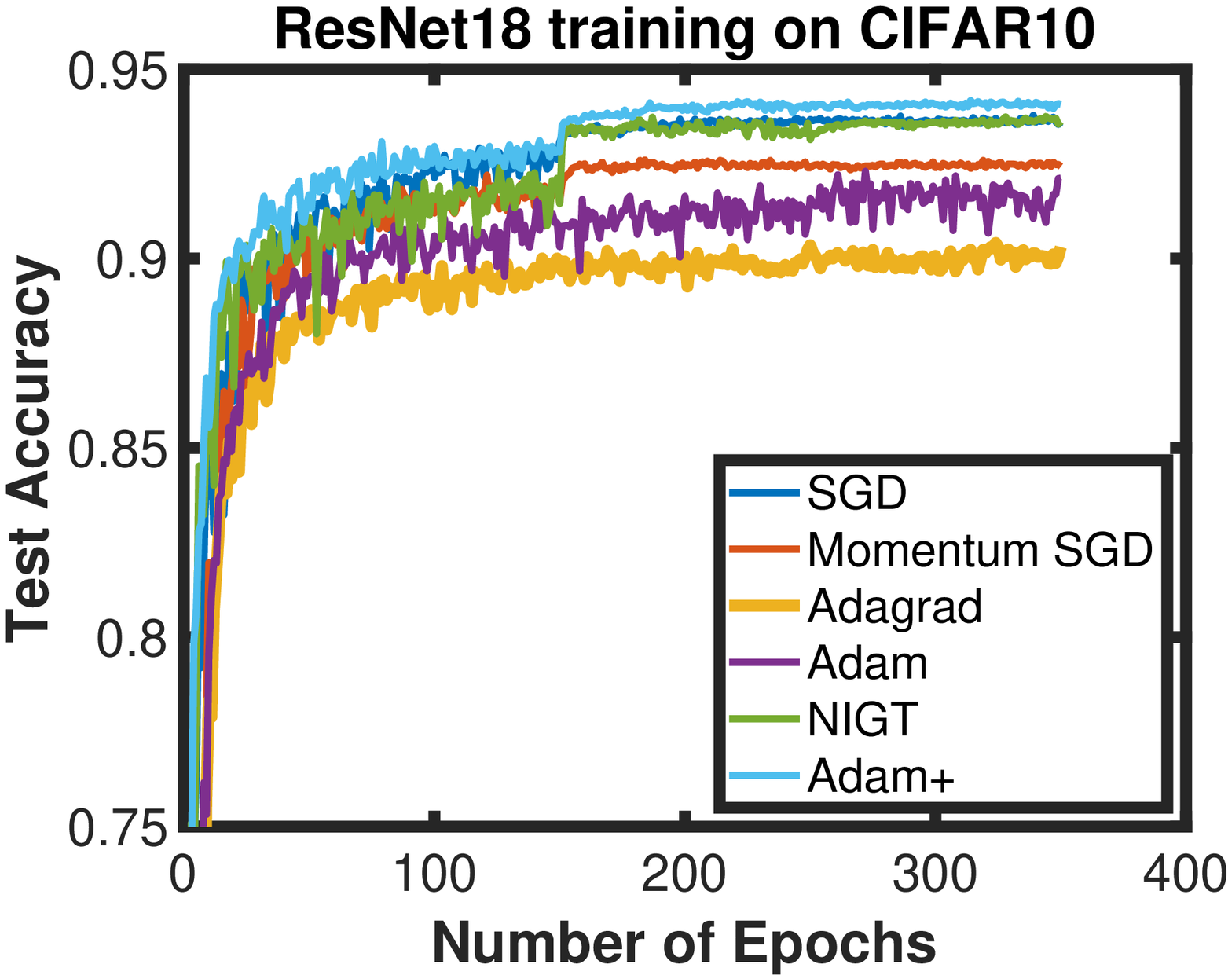}
	\caption{Comparison of optimization methods for ResNet18 Training on CIFAR10.}
	\label{exp:cifar10}
\end{figure}
\begin{figure}[t]
	\centering
	\includegraphics[scale=0.3]{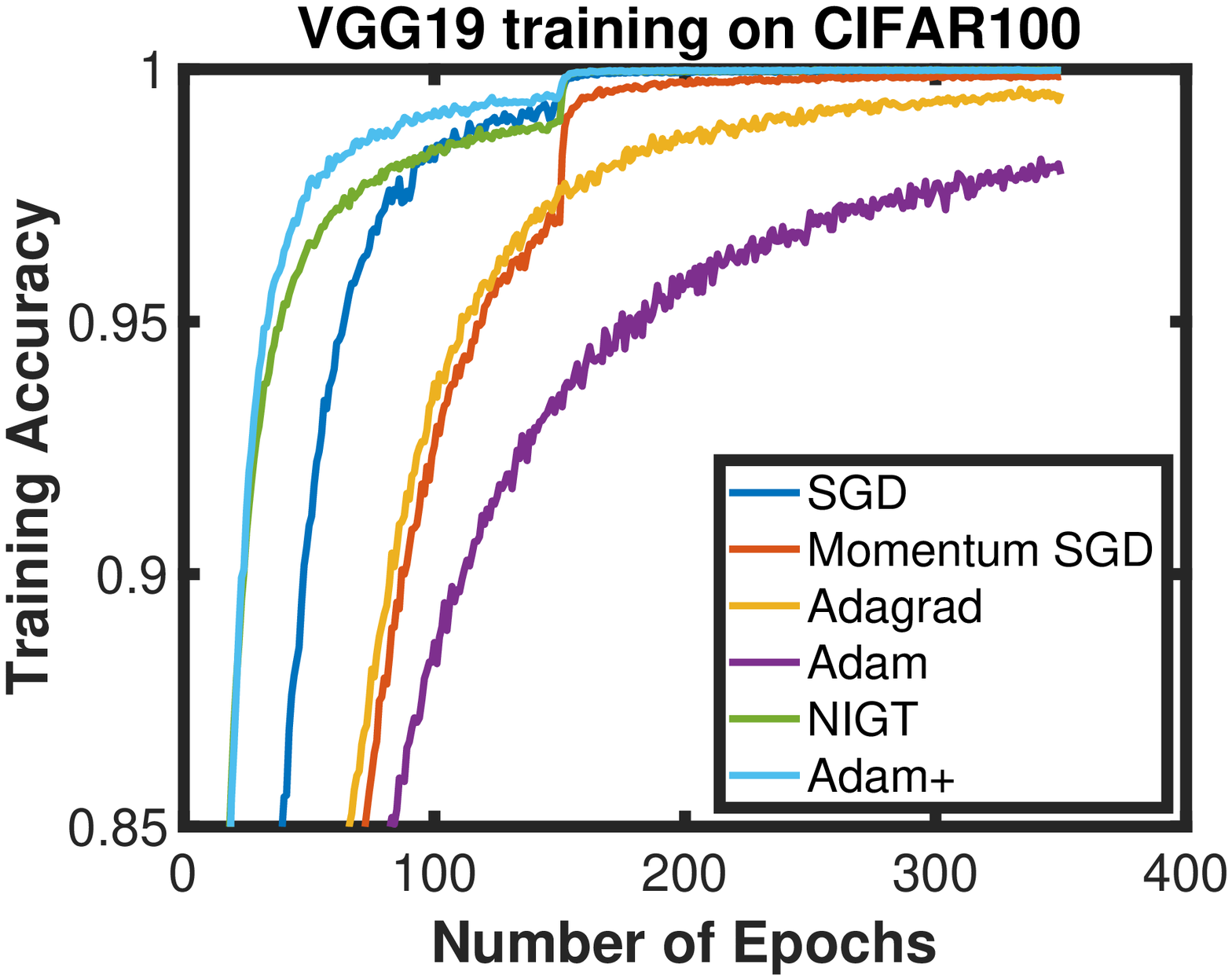}
	\includegraphics[scale=0.3]{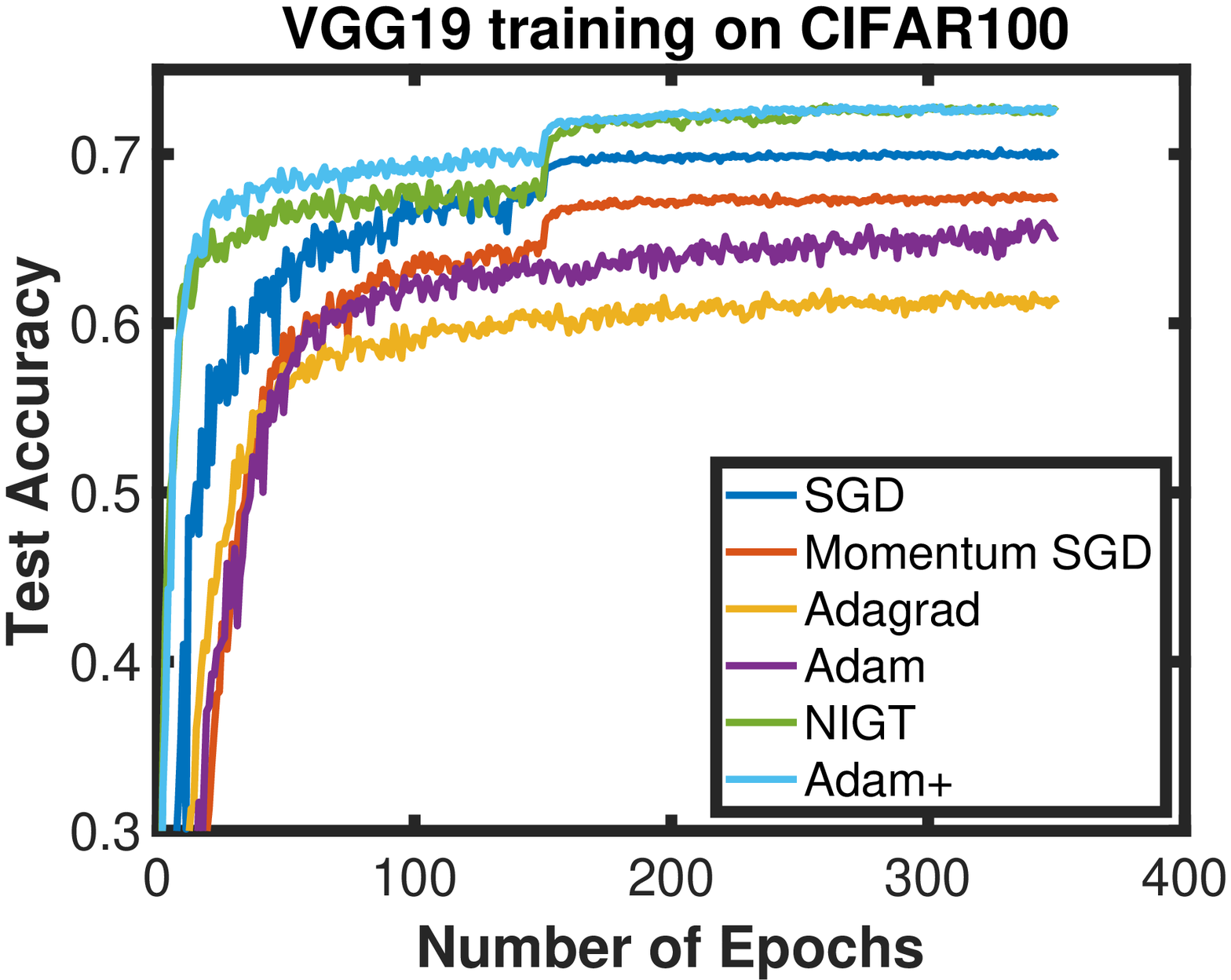}
	\caption{Comparison of optimization methods for VGG19 training on CIFAR100.}
	\label{exp:cifar100}
\end{figure}

\subsection{Language Modeling}

\paragraph{Wiki-text2} In the second experiment, we consider the language modeling task on WikiText-2 dataset. We use a 2-layer LSTM~\citep{hochreiter1997long}. The size of word embeddings is $650$ and the number of hidden units per layer is $650$. We run every algorithm for $40$ epochs, with batch size $20$ and dropout ratio $0.5$. For SGD and momentum SGD, we tune the initial learning rate from $\{0.1,0.2,0.5, 5,10,20\}$ and decrease the learning rate by factor of $4$ when the validation error saturates. For Adagrad and Adam, we tune the initial learning rate from $\{0.001, 0.01, 0.1, 1.0\}$. We report the best performance for these methods across the range of learning rate. The best initial learning rates for Adagrad and Adam are 0.01 and 0.001 respectively. For NIGT, we tune the initial value of learning rate from the same range as in SGD, and tune the momentum parameter $\beta$ from $\{0.01, 0.1, 0.9\}$, and the best parameter choice is $\beta=0.9$. The learning rate and $\beta$ are both decreased by a factor of $4$ when the validation error saturates. For Adam$^{+}$, we follow the same tuning strategy as NIGT.  


We report both training and test perplexity versus the number of epochs in Figure~\ref{exp:NLP}. From the Figure, we have the following observations: First, in terms of training perplexity, our algorithm achieves comparable performance with SGD and momentum SGD and outperforms Adagrad and NIGT, and it is worse than Adam. Second, in terms of test perplexity, our algorithm outperforms Adam, Adagrad, NIGT and momentum SGD, and it is comparable to SGD. An interesting observation is that Adam does not generalize well even if it has fast convergence in terms of training error, which is consistent with the observations in~\citep{wilson2017marginal}.
\begin{figure}[t]
	\centering
	\includegraphics[scale=0.3]{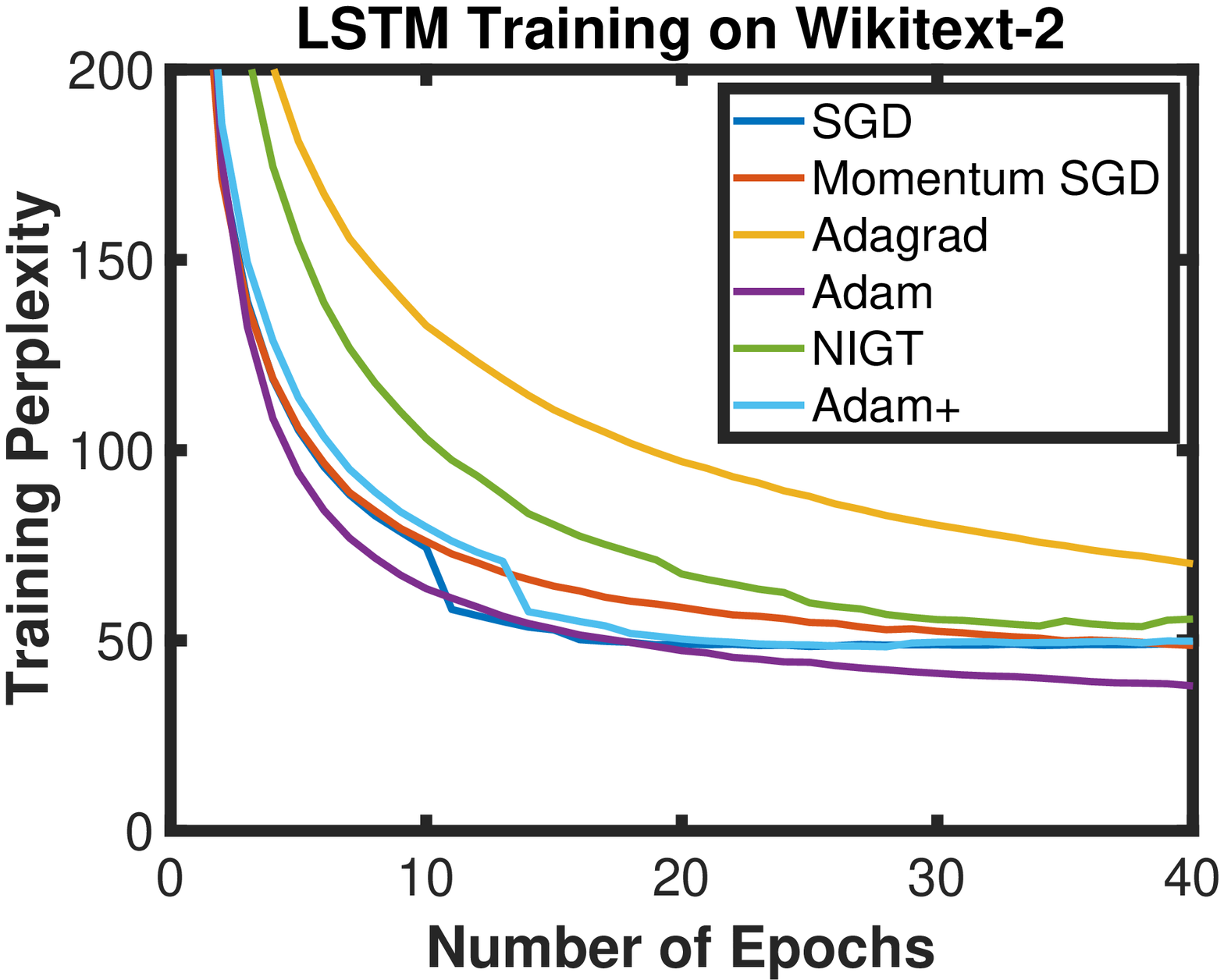}
	\includegraphics[scale=0.3]{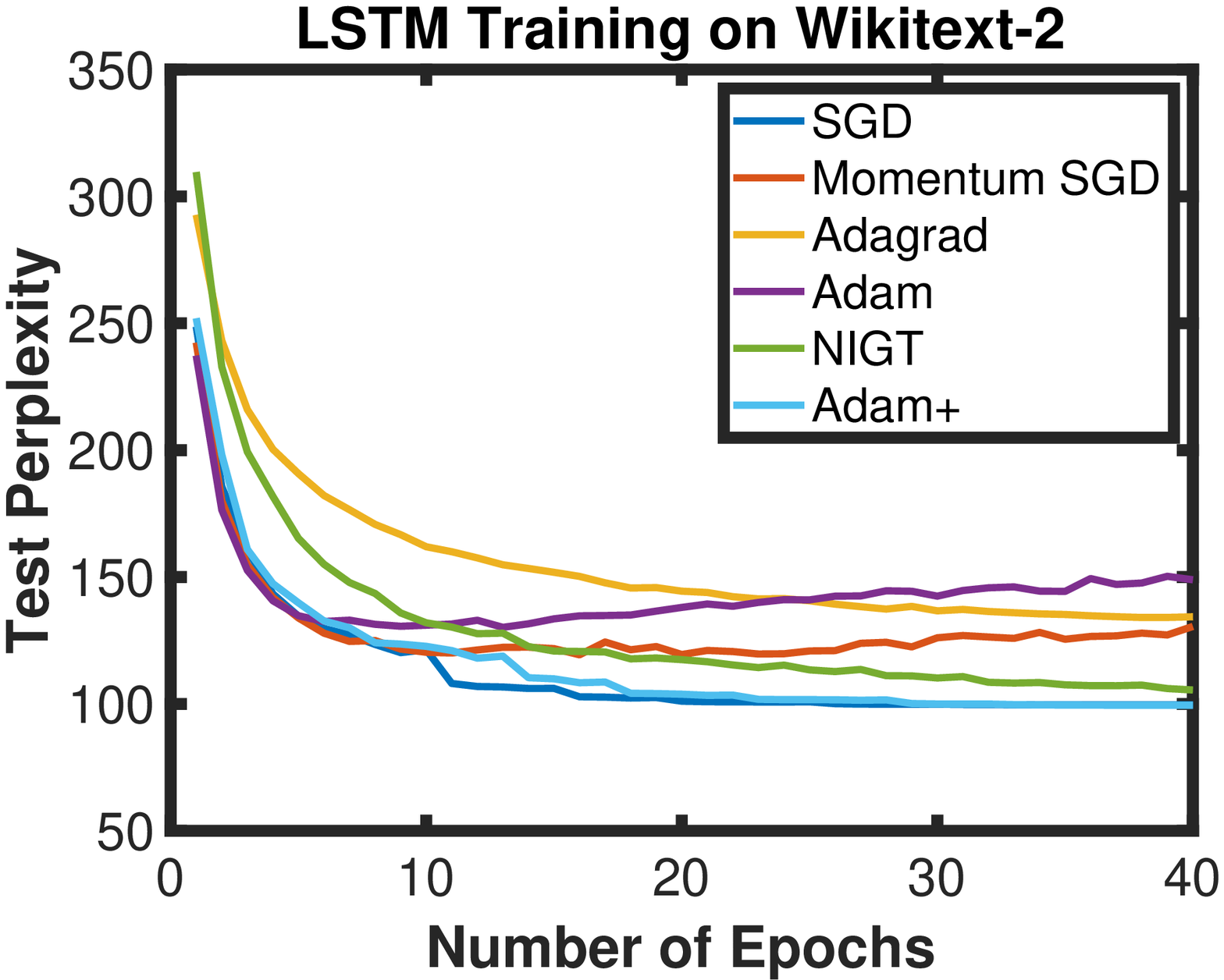}
	\caption{Comparison of optimization methods for two-layers LSTM training on WikiText-2.}
	\label{exp:NLP}
\end{figure}
\vspace*{-0.1in}
\subsection{Automatic Speech Recognition}
\paragraph{SWB-300} In the third experiment, we consider the automatic speech recognition task on SWB-300 dataset~\citep{saon-interspeech-2017}. SWB-300 contains roughly 300 hours of training data of over 4 million samples (30GB) and roughly 6 hours of held-out data of over 0.08 million samples (0.6GB). Each training sample is a fusion of FMLLR (40-dim), i-Vector (100-dim), and logmel with its delta and double delta. The acoustic model is a long short-term memory (LSTM) model with 6 bi-directional layers. Each layer contains 1,024 cells (512 cells in each direction). On top of the LSTM layers, there is a linear projection layer with 256 hidden units, followed by a softmax output layer with 32,000 (i.e., 32,000 classes) units corresponding to context-dependent HMM states. The LSTM is unrolled with 21 frames and trained with non-overlapping feature sub-sequences of that length. This model contains over 43 million parameters and is about 165MB large. The training takes about 20 hours on 1 V100 GPU. To compare, we adopt the well-tuned Momentum SGD strategy as described in \citep{icassp19} for this task as the baseline:  batch size is 256, learning rate is 0.1 for the first 10 epochs and then annealed by $\sqrt{0.5}$ for another 10 epochs, with momentum 0.9. We grid search the learning rate of Adam and Adagrad from $\{0.1, 0.01, 0.001\}$, and  report the best configuration we have found (Adam with learning rate $0.001$ and Adagrad with learning rate 0.01). For NIGT, we also follow the same learning rate setup (including annealing) as in Momentum SGD baseline. In addition, we fine tuned $\beta$ in NIGT by exploring $\beta$ in $\{0.01, 0.1, 0.9\}$ and reported the best configuration ($\beta=0.9$).  For Adam$^+$, we follow the same learning rate and annealing strategy as in the Momentum SGD  and tuned $\beta$ in the same way as in NIGT, reporting the best configuration ($\beta=0.01$). From Figure~\ref{exp:asr}, Adam$^+$ achieves the indistinguishable training loss and held-out loss w.r.t. well-tuned Momentum SGD baseline and significantly outperforms the other optimizers.


\begin{figure}[t]
	\centering
	\includegraphics[scale=0.3]{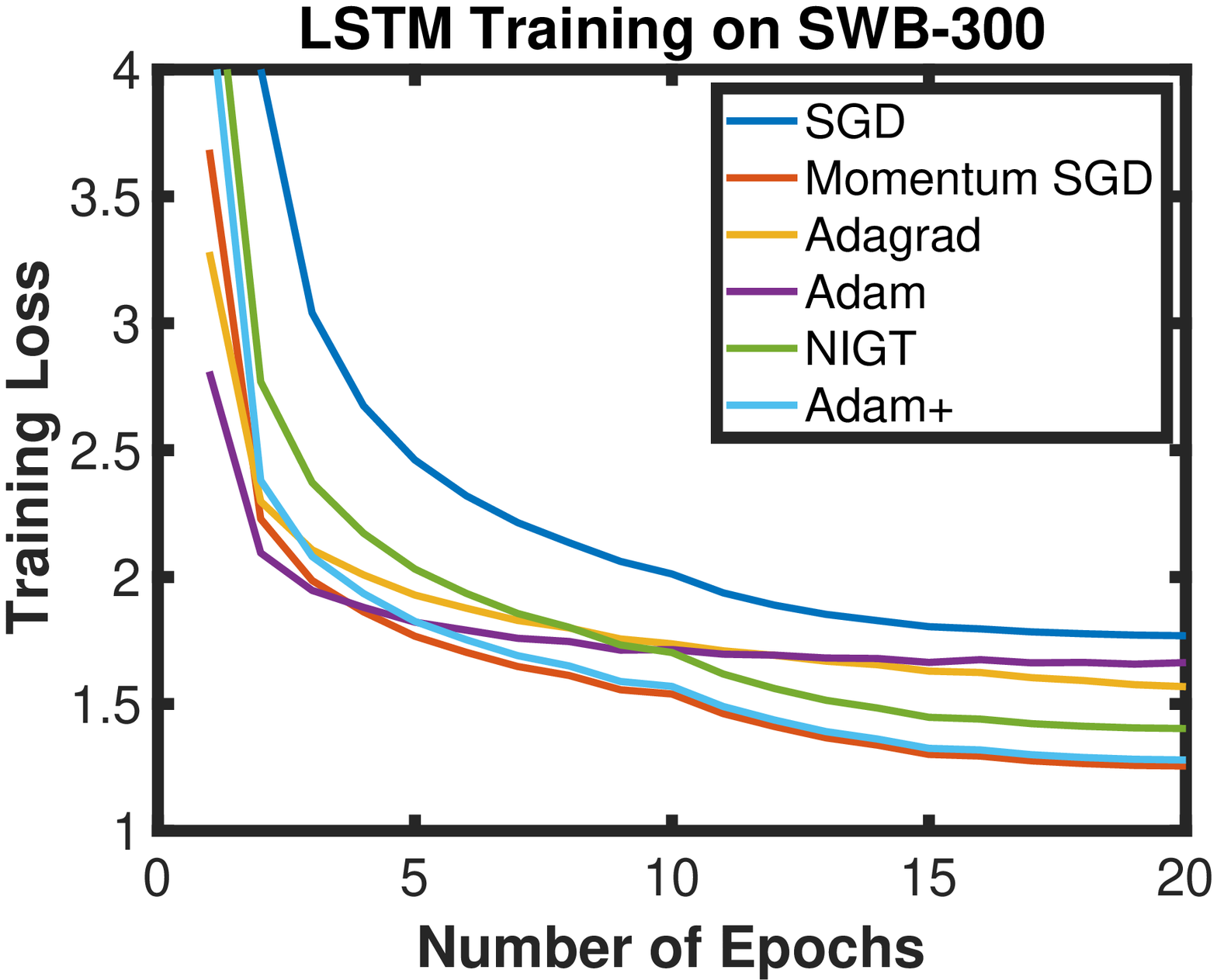}
	\includegraphics[scale=0.3]{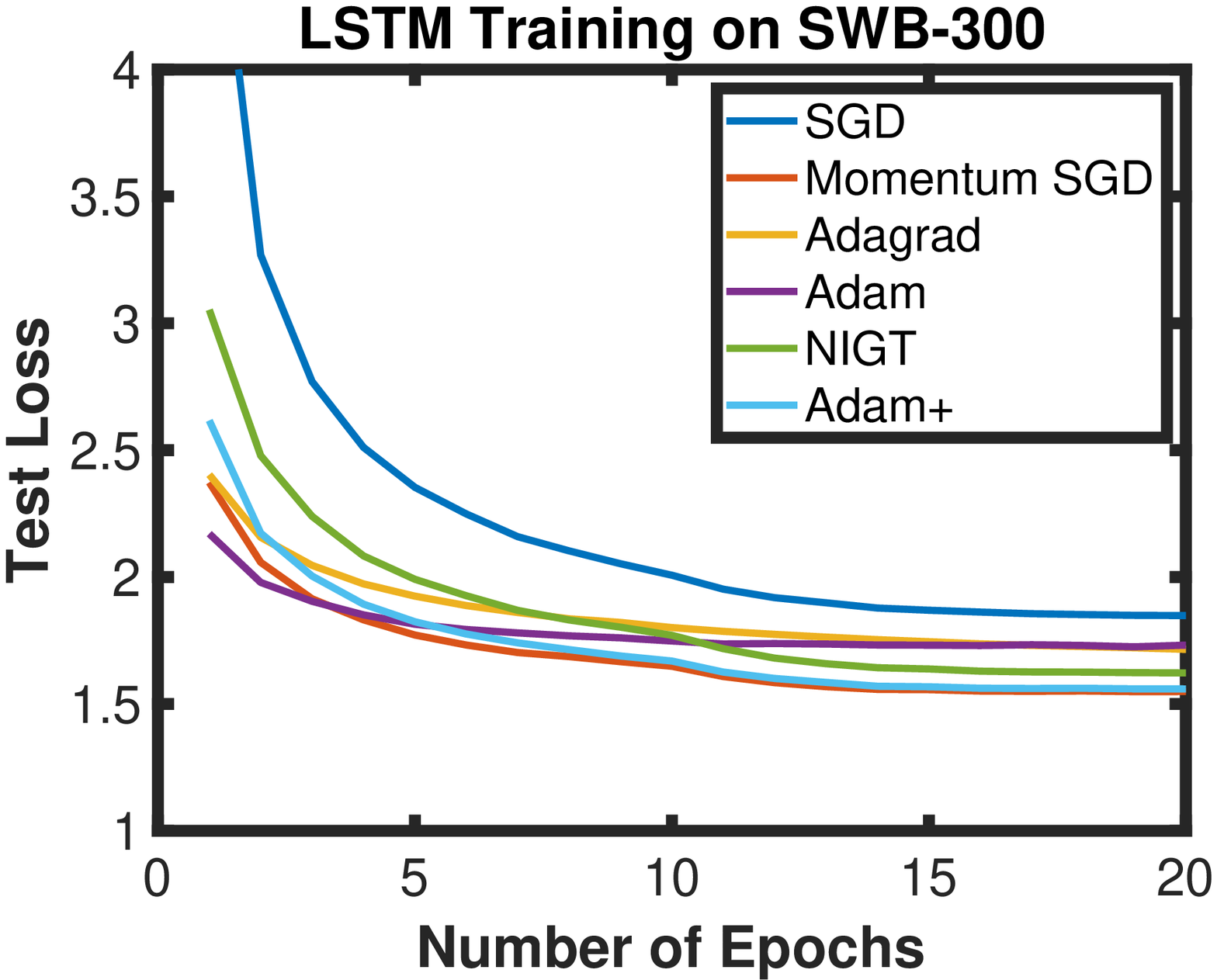}
	\caption{Comparison of optimization methods for six-layers LSTM training on SWB-300.}
	\label{exp:asr}
\end{figure}
\vspace*{-0.1in}
\subsection{Growth Rate of $\sum_{i=1}^{t}\|\z_i\|$}
\vspace*{-0.1in}
In this subsection, we consider the growth rate of  $\sum_{i=1}^{t}\|\z_i\|$, since they crucially affect the convergence rate as shown in Theorem~\ref{thm1}. We report the results of both ResNet18 training on CIFAR10 dataset and VGG19 training on CIFAR100 dataset. From Figure~\ref{exp:growth}, we can observe that it quickly reaches a plateau and then grows at a very slow rate with respect to the number of iterations. Specifically, the plateau is reached at around $6\times 10^4$ iterations, which corresponds to the epoch 154. At this particular epoch, the training and test accuracy are far from ideal and hence we need to keep the training until epoch 350. Then our algorithm Adam$^+$ is able to take advantage of the slow growth rate of $\sum_{i=1}^{t}\|\z_i\|$ for large $t$ and enjoys faster convergence, which is consistent with Figure~\ref{exp:cifar10}. This phenomenon verifies the variance reduction effect and also explains the reason why Adam$^{+}$ enjoys a fast convergence speed in practice.
\begin{figure}[t]
	\centering
	\includegraphics[scale=0.3]{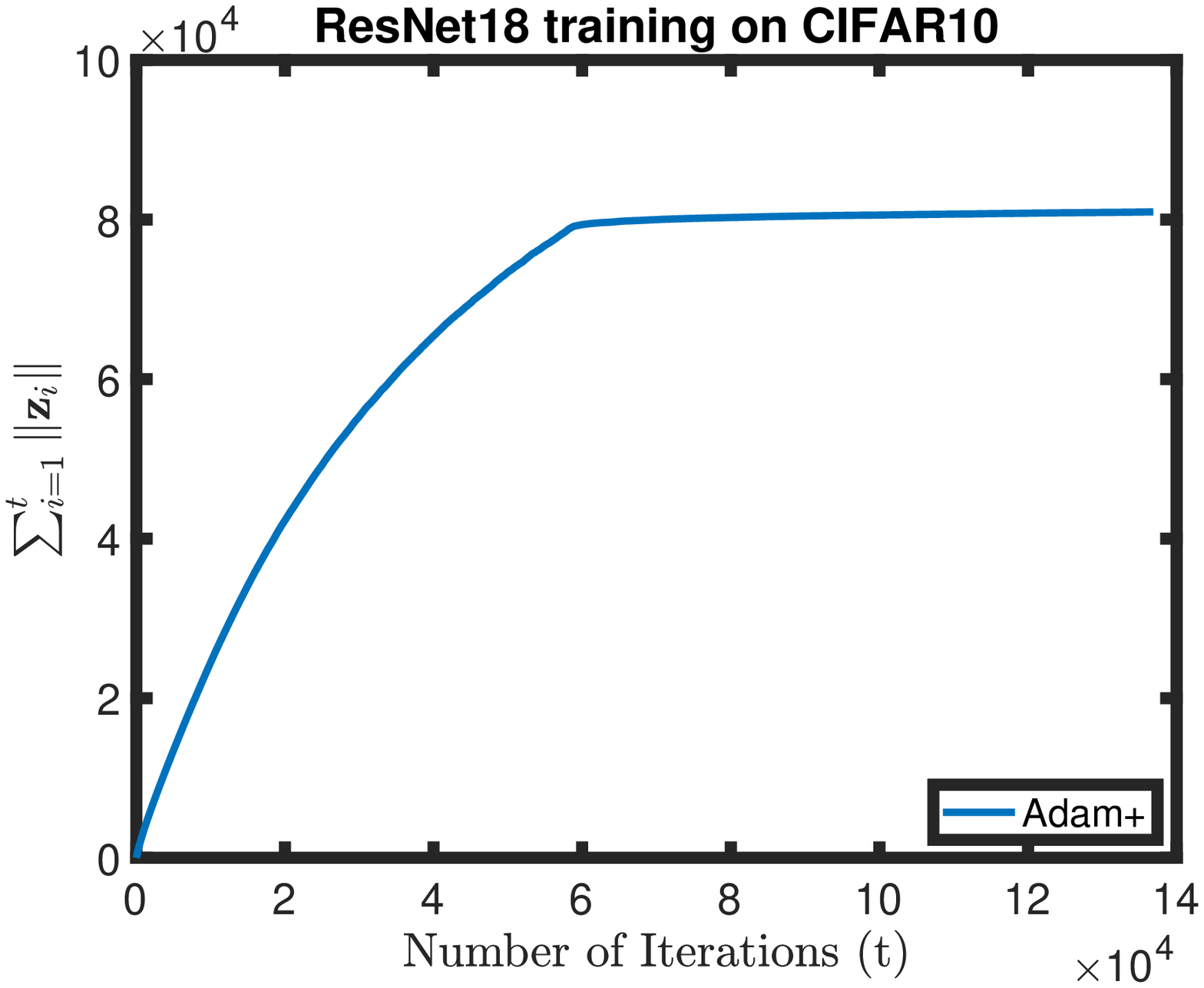}
	\includegraphics[scale=0.3]{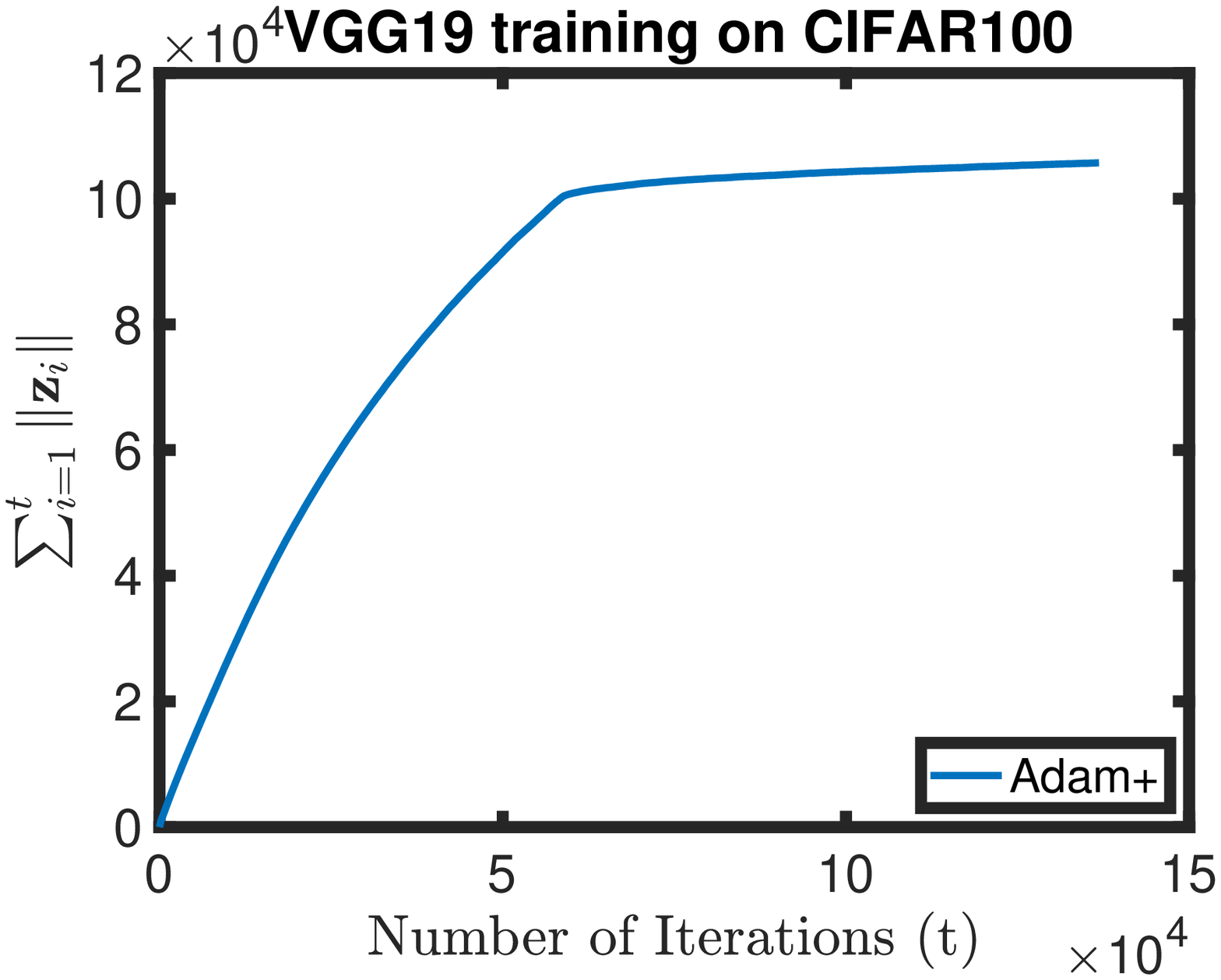}
	\caption{The growth of quantity $\sum_{i=1}^{t}\|\z_i\|$ in Adam$^+$}
	\label{exp:growth}
\end{figure}
\vspace*{-0.15in}
\section{Conclusion}
\vspace*{-0.15in}
In this paper, we design a new algorithm named Adam$^+$ to train deep neural networks efficiently. Different from Adam, Adam$^+$ updates the solution using moving average of stochastic gradients calculated at the extrapolated points and adaptive normalization on only first-order statistics of stochastic gradients. We establish data-dependent adaptive complexity results for Adam$^{+}$ from the perspective of adaptive variance reduction, and also show that a variant of Adam$^+$ achieves state-of-the-art complexity. Extensive empirical studies on several tasks verify the effectiveness of the proposed algorithm. We also empirically show that the slow growth rate of the new gradient estimator, providing the reason why Adam$^+$ enjoys fast convergence in practice.

\newpage
\bibliography{iclr2021_conference}
\bibliographystyle{iclr2021_conference}
\newpage
\appendix
\section{Proof of Lemma~\ref{lemma:1}}
\label{proof:lemma1}
\begin{proof}
	The proof is similar to that of Lemma 12 in~\citep{wang2017stochastic}. Define
	\begin{equation}
	\begin{aligned}
	\zeta_k^{(t)}=\begin{cases}
	\beta (1-\beta)^{t-k} &  \text{if  } t\geq k> 0\\
	(1-\beta)^{t-k}					& \text{if } t\geq k=0
	\end{cases}\\
	\end{aligned}
	\end{equation}
	By the definition of $\zeta_t^{(k)}$ and the update of Algorithm~\ref{Alg:1}, we have 
	$$\zeta_k^{(t+1)}=(1-\beta)\zeta_k^{(t)}, \quad \sum_{k=0}^{t}\zeta_k^{(t)}=1,\quad \w_t=\sum_{k=0}^{t}\zeta_k^{(t)}\wh_{t+1},\quad \z_{t+1}=\sum_{k=0}^{t}\zeta_k^{(t)}\nabla f(\wh_{t+1};\xi_{t+1}).$$
	Define $m_{t+1}=\sum_{k=0}^{t}\zeta_k^{(t)}\|\w_{t+1}-\wh_{k+1}\|^2$, $n_{t+1}=\sum_{k=0}^{t}\zeta_k^{(t)}\left[\nabla f(\wh_{k+1};\xi_{k+1})-F(\wh_{k+1})\right]$, where $\nabla f(\wh_{k+1};\xi_{k+1})$ is an unbiased stochastic first-order oracle for $F(\wh_{k+1})$ with bounded variance $\sigma_m^2$.
	Note that $\nabla F$ is a $L_H$-smooth mapping (according to Assumption~\ref{ass:1} (iii)), then by Lemma 10 of~\citep{wang2017stochastic}, we have
	\begin{equation*}
	\|\z_t-\nabla F(\w_t)\|^2\leq (L_Hm_t+\|n_t\|)^2\leq 2L_{H}^2m_t^2+2\|n_t\|^2.
	\end{equation*}
	Define $q_{t+1}=\sum_{k=0}^{t}\zeta_k^{(t)}\left\|\w_{t+1}-\wh_{k+1}\right\|$. According to Lemma 11 (a) and (b) of~\citep{wang2017stochastic}, we have
	\begin{equation*}
	m_{t+1}+4q_{t+1}^2\leq \left(1-\frac{\beta}{2}\right)\left(m_t+4q_t^2\right)+\frac{18}{\beta}\|\w_{t+1}-\w_t\|^2.
	\end{equation*}
	Taking squares on both sides of the inequality and using the fact that $(a+b)^2\leq (1+\frac{\beta}{2})a^2+(1+\frac{2}{\beta})b^2$ for $\beta>0$, we have
	\begin{equation}
	\label{inequality:1}
	\begin{aligned}
	\left(m_{t+1}+4q_{t+1}^2\right)^2&\leq \left(1+\frac{\beta}{2}\right)\left(1-\frac{\beta}{2}\right)^2\left(m_t+4q_t^2\right)^2+\left(1+\frac{2}{\beta}\right)\frac{324}{\beta^2}\|\w_{t+1}-\w_t\|^4\\
	&\leq \left(1-\frac{\beta}{2}\right)(m_t+4q_t^2)^2+\frac{972}{\beta^3}\left\|\w_{t+1}-\w_t\right\|^4,
	\end{aligned}
	\end{equation}
	where the last inequality holds since $1/\beta\geq 1$.

	Define $\delta_t^2=2L_{H}^2(m_t+4q_t^2)^2+2\left\|n_t\right\|^2$, then we have $\left\|\z_t-\nabla F(\w_t)\right\|^2\leq \delta_t^2$ for all $t$. Denote $\mathcal{F}_{t+1}$ by the $\sigma$-algebra generated by $\xi_1,\ldots,\xi_{t+1}$. Taking the summation of (\ref{inequality:1}) and according to the bound of $n_t$ derived in Lemma 11 (c) of~\citep{wang2017stochastic}, we have
	\begin{equation*}
	\begin{aligned}
	\E\left[\delta_{t+1}^2\vert \mathcal{F}_{t+1}\right]\leq \left(1-\frac{\beta}{2}\right)\delta_t^2+2\beta^2\sigma_m^2+\frac{1944L_{H}^2\left\|\w_{t+1}-\w_t\right\|^4}{\beta^3},
	\end{aligned}
	\end{equation*}
	Taking expectation on both sides yields
	\begin{align*}
	\E\left[\delta_{t+1}^2\right] \leq \left(1- \frac{\beta}{2}\right)\E\left[\delta_t^2\right] +2\beta^2 \sigma_m^2 + \E\left[\frac{1944L_{H}^2\|\w_{t+1} - \w_t\|^4}{\beta^3}\right].
	\end{align*}
	Note that $\eta_t=\frac{\alpha\beta^a}{\max\left(\|\z_t\|^{1/2},\epsilon_0\right)}$, we have
	\begin{equation*}
	\E\left[\delta_{t+1}^2\right] \leq    \left(1- \frac{\beta}{2}\right)\E\left[\delta_t^2\right] +2\beta^2 \sigma_m^2 + \E\left[CL^2\alpha^4\beta^{4a-3}\|\z_t\|^2\right]. \qedhere
	\end{equation*}
\end{proof}
\section{Proof of Theorem~\ref{thm1}}
\begin{proof}
	By Lemma~\ref{lemma:1} and the update rule of Algorithm~\ref{Alg:1}, we have
	\begin{equation}
	\label{mainproof:ineq1}
	\begin{aligned}
	\E\left[\delta_{t+1}^2\right] &\leq \left(1- \frac{\beta}{2}\right)\E\left[\delta_t^2\right] +2\beta^2 \sigma_m^2 + \E\left[\frac{CL_H^2\alpha^4\beta^{4a}\|\z_t\|^4}{\beta^3(\max(\|\z_t\|^{1/2},\epsilon_0))^4}\right]\\
	& \leq \left(1- \frac{\beta}{2}\right)\E\left[\delta_t^2\right] +2\beta^2 \sigma_m^2 + \E\left[\frac{2CL_H^2\alpha^4\beta^4(\|\delta_t\|^2 + \|\nabla F(\w_t)\|^2)}{\beta^3}\right],
	\end{aligned}
	\end{equation}
	where the second inequality holds since $(\max(\|\z_t\|^{1/2},\epsilon_0))^4\geq \|\z_t\|^2$ and $\|\z_t\|^2\leq 2\|\delta_t\|^2+2\left\|\nabla F(\w_t)\right\|^2$.
	
	Note that $2CL_H^2\alpha^4\leq 1/18$. Plugging it in~(\ref{mainproof:ineq1}), we have
	\begin{equation}
	\label{mainproof:ineq2}
	\frac{8\beta}{18}\E\left[\delta_t^2\right]\leq \E\left[\delta_t^2 -\delta_{t+1}^2 \right] +2\beta^2 \sigma_m^2 +  \E\left[\frac{\beta}{18}\|\nabla F(\w_t)\|^2\right].
	\end{equation}
	Summing over $t=1,\ldots,T$ on both sides of~(\ref{mainproof:ineq2}) and with some simple algebra, we have 
	\begin{equation}
	\label{mainproof:ineq3}
	\sum_{t=1}^T\E\left[\delta_t^2\right]\leq \sum_{t=1}^{T}\E\left[\frac{3\left(\delta_t^2 -\delta_{t+1}^2\right)}{\beta}\right] +\sum_{t=1}^{T} 5\beta \sigma_m^2 +  \sum_{t=1}^T\E\left[\frac{1}{8}\|\nabla F(\w_t)\|^2\right].
	\end{equation}
	
	By Assumption~\ref{ass:1} (i) and by the property of $L$-smooth function, we know that
	\begin{equation*}
	\begin{aligned}
	F(\w_{t+1})&\leq F(\w_t)+\nabla^\top F(\w_t)(\w_{t+1}-\w_t)+\frac{L}{2}\left\|\w_{t+1}-\w_t\right\|^2\\
	&{=}F(\w_t)-\eta_t\nabla^\top F(\w_t)\z_t+\frac{\eta_t^2L}{2}\left\|\z_t\right\|^2\\
	&\leq F(\w_t)-\eta_t\nabla^\top F(\w_t)(\z_t-\nabla F(\w_t)+\nabla F(\w_t))+\eta_t^2 L
	\left(\left\|\z_t-\nabla F(\w_t)\right\|^2+\left\|\nabla F(\w_t)\right\|^2\right)\\
	&=F(\w_t)-(\eta_t-\eta_t^2 L)\|\nabla F(\w_t)\|^2-\eta_t\nabla^\top F(\w_t)(\z_t-\nabla F(\w_t))+\eta_t^2 L\left\|\z_t-F(\w_t)\right\|^2\\
	&\leq F(\w_t)-\left(\frac{\eta_t}{2}-\eta_t^2 L\right)\|\nabla F(\w_t)\|^2+\left(\frac{\eta_t}{2}+\eta_t^2 L\right) \left\|\z_t-F(\w_t)\right\|^2.
	\end{aligned}
	\end{equation*}
	Noting that $\eta_t=\frac{\alpha\beta^a}{\max\left(\|\z_t\|^{1/2},\epsilon_0\right)}$, $\alpha\leq 1/4L$ and $\epsilon_0=\beta^a$, we know that $\eta_t L\leq 1/4$. Hence we have 
	\[
	\left\|\nabla F(\w_t)\right\|^2\leq \frac{4(F(\w_t)-F(\w_{t+1}))}{\eta_t}+3\|\z_t-\nabla F(\w_t)\|^2.
	\]
	Taking summation over $t=1,\ldots, T$ and taking expectation yield
	\begin{equation}
	\label{mainproof:ineq4}
	\begin{aligned}
	\sum_{t=1}^{T}\E\left\|\nabla F(\w_t)\right\|^2&\leq \E\left[\sum_{t=1}^{T}\frac{4(F(\w_t)-F(\w_{t+1}))}{\eta_t}\right]+3\sum_{t=1}^{T}\E\|\z_t-\nabla F(\w_t)\|^2\\
	&\leq \E\left[\sum_{t=1}^{T}\frac{4(F(\w_t)-F(\w_{t+1}))}{\eta_t}\right]+3\sum_{t=1}^{T}\E\left[\delta_t^2\right].
	\end{aligned}
	\end{equation}
	Combining~(\ref{mainproof:ineq3}) and~(\ref{mainproof:ineq4}) yields
	\[
	\begin{aligned}
	\sum_{t=1}^{T}\E\left\|\nabla F(\w_t)\right\|^2 
	&\leq \E\left[\sum_{t=1}^T\frac{4(F(\w_t)-F(\w_{t+1}))}{\eta_t}\right]+\sum_{t=1}^{T}\E\left[\frac{9\left(\delta_t^2 -\delta_{t+1}^2\right)}{\beta}\right] \\
	&\quad +\sum_{t=1}^{T} 15\beta \sigma_m^2 +  \sum_{t=1}^T\E\left[\frac{3}{8}\|\nabla F(\w_t)\|^2\right].
	\end{aligned}
	\]
	By some simple algebra, we have 
	\[
	\begin{aligned}
	\sum_{t=1}^{T}\E\left\|\nabla F(\w_t)\right\|^2\leq \E\left[\sum_{t=1}^T\frac{8(F(\w_t)-F(\w_{t+1}))}{\eta_t}\right]+\sum_{t=1}^{T}\E\left[\frac{18\left(\delta_t^2 -\delta_{t+1}^2\right)}{\beta}\right] +\sum_{t=1}^{T} 30\beta \sigma_m^2.
	\end{aligned}
	\]
	Then we have
	\begin{equation}
	\label{ineq:11111}
	\frac{1}{T}\sum_{t=1}^{T}\E\left\|\nabla F(\w_t)\right\|^2\leq \E\left[\sum_{t=1}^T\frac{8\max\left(\|\z_t\|^{1/2},\epsilon_0\right)(F(\w_t)-F(\w_{t+1}))}{\alpha\beta^a T}\right]+\frac{18\sigma_0^2}{\beta T} + 30\beta \sigma_m^2.  
	\end{equation}
	Noting that $|F(\w_t)-F(\w_{t+1})|\leq G\eta_t\|\z_t\|$, we have
	\begin{equation}
	\label{ineq:11121}
	\frac{1}{T}\sum_{t=1}^{T}\E\left\|\nabla F(\w_t)\right\|^2\leq \frac{8G\E\left[\sum_{t=1}^{T}\|\z_t\|\right]}{T}+\frac{\Delta}{\alpha T}+\frac{18\sigma_0^2}{\beta T} + 30\beta \sigma_m^2. \qedhere
	\end{equation}
	
\end{proof}

\section{Proof of Theorem~\ref{thm:2}}
Before introducing the proof, we first introduce several lemmas which are useful for our analysis.
\begin{lemma}
	\label{lem1:1}
	Adam$^{+}$ with $\eta_t=\frac{\alpha\beta}{\max\left(\|\z_t\|^{1/2},\epsilon_0\right)}$ and $\epsilon_0=0$ satisfies
	\begin{equation*}
	F(\w_{t+1})-F(\w_t)\leq \alpha\beta\left(-\frac{\|\nabla F(\w_t)\|^{3/2}}{6}+9\|\z_t-\nabla F(\w_t)\|^{3/2} \right)+\frac{64\alpha^4\beta^4L^3}{3}.
	\end{equation*}
\end{lemma}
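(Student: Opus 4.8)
\textbf{Setup.}The plan is to start from the $L$-smoothness descent inequality (Assumption~\ref{ass:1}(i)) applied to the update $\w_{t+1} = \w_t - \eta_t\z_t$ with $\eta_t = \alpha\beta/\|\z_t\|^{1/2}$ (so $a=1$, $\epsilon_0=0$). Since $\eta_t^2\|\z_t\|^2 = \alpha^2\beta^2\|\z_t\|$, this yields
\begin{equation*}
F(\w_{t+1}) - F(\w_t) \;\le\; -\,\alpha\beta\,\frac{\langle\nabla F(\w_t),\z_t\rangle}{\|\z_t\|^{1/2}} \;+\; \frac{L\alpha^2\beta^2}{2}\,\|\z_t\|.
\end{equation*}
Abbreviating $g := \nabla F(\w_t)$ and $\mathbf{e}_t := \z_t - \nabla F(\w_t)$, the whole statement reduces to bounding the scalar quantity $-\alpha\beta\,\langle g,\z_t\rangle/\|\z_t\|^{1/2} + \tfrac12 L\alpha^2\beta^2\|\z_t\|$ from above by $\alpha\beta\bigl(-\tfrac16\|g\|^{3/2} + 9\|\mathbf{e}_t\|^{3/2}\bigr) + \tfrac{64}{3}\alpha^4\beta^4 L^3$. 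I would do this by splitting on whether the error dominates the true gradient.

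In the case $\|\mathbf{e}_t\|\le\|g\|$, the inner product is controlled from below: $\langle g,\z_t\rangle = \|g\|^2 + \langle g,\mathbf{e}_t\rangle \ge \|g\|^2 - \|g\|\|\mathbf{e}_t\| \ge 0$, while $\|\z_t\|^{1/2}\le\|g\|^{1/2}+\|\mathbf{e}_t\|^{1/2}\le 2\|g\|^{1/2}$; hence $\langle g,\z_t\rangle/\|\z_t\|^{1/2}\ge\tfrac12\|g\|^{3/2}-\tfrac12\|g\|^{1/2}\|\mathbf{e}_t\|$, and a rescaled Young's inequality $\|g\|^{1/2}\|\mathbf{e}_t\|\le\tfrac{\lambda^3}{3}\|g\|^{3/2}+\tfrac{2}{3\lambda^{3/2}}\|\mathbf{e}_t\|^{3/2}$ with $\lambda^3=3/2$ turns this into a lower bound $\tfrac14\|g\|^{3/2}-c\|\mathbf{e}_t\|^{3/2}$ with $c=1/(3\sqrt{3/2})<9$. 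For the curvature term, $\|\z_t\|\le 2\|g\|$ and a second Young's inequality $L\alpha\beta\|g\|\le\tfrac1{12}\|g\|^{3/2}+\tfrac{64}{3}(L\alpha\beta)^3$ (the rescaling constant chosen as $1/4$ to produce exactly the $64/3$) give $\tfrac12L\alpha^2\beta^2\|\z_t\|\le\tfrac{\alpha\beta}{12}\|g\|^{3/2}+\tfrac{64}{3}\alpha^4\beta^4L^3$. Adding, the $\|g\|^{3/2}$ coefficient becomes $-\tfrac14+\tfrac1{12}=-\tfrac16$, and the $\|\mathbf{e}_t\|^{3/2}$ coefficient stays well below $9$.

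In the complementary case $\|\mathbf{e}_t\|>\|g\|$ the inner-product term need not be negative, but a crude bound suffices: Cauchy--Schwarz and the triangle inequality give $-\langle g,\z_t\rangle/\|\z_t\|^{1/2}\le\|g\|\|\z_t\|^{1/2}\le\|g\|^{3/2}+\|g\|\|\mathbf{e}_t\|^{1/2}<2\|\mathbf{e}_t\|^{3/2}$, and the same rescaled Young's inequality bounds $\tfrac12L\alpha^2\beta^2\|\z_t\|\le\tfrac{\alpha\beta}{12}\|\mathbf{e}_t\|^{3/2}+\tfrac{64}{3}\alpha^4\beta^4L^3$. Summing yields $F(\w_{t+1})-F(\w_t)\le\tfrac{25}{12}\alpha\beta\|\mathbf{e}_t\|^{3/2}+\tfrac{64}{3}\alpha^4\beta^4L^3$, which is dominated by the claimed bound since $\|g\|<\|\mathbf{e}_t\|$ forces $\tfrac16\|g\|^{3/2}+\tfrac{25}{12}\|\mathbf{e}_t\|^{3/2}\le 9\|\mathbf{e}_t\|^{3/2}$. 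The degenerate possibility $\z_t=0$ is harmless, since then the step vanishes and the right-hand side is nonnegative.

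The only real difficulty is bookkeeping: the two Young's-inequality rescalings must be chosen so that the $\|g\|^{3/2}$ coefficient lands exactly at $-1/6$ and the additive term exactly at $\tfrac{64}{3}\alpha^4\beta^4L^3$, while the $\|\mathbf{e}_t\|^{3/2}$ coefficient remains comfortably under $9$. The case split itself is essentially forced, because $z\mapsto\langle g,z\rangle/\|z\|^{1/2}$, although bounded, is neither monotone nor sign-definite, so one cannot lower-bound it by $\|g\|^{3/2}$ uniformly — only when the estimate $\z_t$ points roughly in the direction of $g$, i.e., when $\|\mathbf{e}_t\|$ is small relative to $\|g\|$.
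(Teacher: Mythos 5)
Your proposal is correct and follows essentially the same route as the paper's proof: the $L$-smoothness descent inequality, a case split on whether the error $\|\z_t-\nabla F(\w_t)\|$ is small or large relative to $\|\nabla F(\w_t)\|$, and a Young-type inequality on the curvature term $L\alpha^2\beta^2\|\z_t\|$ yielding exactly the $\tfrac{64}{3}\alpha^4\beta^4L^3$ remainder. The only differences are bookkeeping: the paper splits at $\|\nabla F(\w_t)\|\geq 2\|\z_t-\nabla F(\w_t)\|$ and handles the curvature term uniformly via $\|\z_t\|^{3/2}\leq 2\|\nabla F(\w_t)\|^{3/2}+2\|\z_t-\nabla F(\w_t)\|^{3/2}$, while you split at ratio one and apply Young within each case, arriving at constants strictly smaller than the stated $-1/6$ and $9$, so the lemma follows.
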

\begin{proof}
	By the $L$-smoothness and the update of the algorithm, we have
	\begin{equation}
	\label{newproof:ineq11}
	\begin{aligned}
	F(\w_{t+1})- F(\w_t) &\leq \nabla^\top F(\w_t)(\w_{t+1}-\w_t)+\frac{L\|\w_{t+1}-\w_t\|^2}{2} \\
	&\leq -\alpha\beta \cdot\frac{\langle \nabla F(\w_t), \z_t\rangle}{\max\left(\|\z_t\|^{1/2},\epsilon_0\right)}+\frac{\alpha^2\beta^2 L\|\z_t\|^2}{\left(\max\left(\|\z_t\|^{1/2},\epsilon_0\right)\right)^2}.
	\end{aligned}
	\end{equation}
	Define $\Delta_t=\z_t-\nabla F(\w_t)$. If $\|\nabla F(\w_t)\|\geq 2\|\Delta_t\|$, we have
	\begin{equation}
	\label{newproof:ineq1}
	\begin{aligned}
	-\frac{\langle \z_t,\nabla F(\w_t)\rangle}{\max\left(\|\z_t\|^{1/2},\epsilon_0\right)} 
	&=-\frac{\|\nabla F(\w_t)\|^2+\langle \Delta_t, \nabla F(\w_t)\rangle}{\max\left(\|\nabla F(\w_t)+\Delta_t\|^{1/2},\epsilon_0\right)}\\
	&\leq -\frac{\|\nabla F(\w_t)\|^2}{2\|\nabla F(\w_t)+\Delta_t\|^{1/2}} \leq -\frac{\|\nabla F(\w_t)\|^{3/2}}{3} \\
	&\leq -\frac{\|\nabla F(\w_t)\|^{3/2}}{3}+8\|\Delta_t\|^{3/2}.
	\end{aligned}
	\end{equation} 
	If $\|\nabla F(\w_t)\|\leq 2\|\Delta_t\|$, we have
	\begin{equation}
	\label{newproof:ineq2}
	\begin{aligned}
	-\frac{\langle \z_t,\nabla F(\w_t)\rangle}{\max\left(\|\z_t\|^{1/2},\epsilon_0\right)}
	&=-\frac{\|\nabla F(\w_t)\|^2+\langle \Delta_t, \nabla F(\w_t)\rangle}{\max\left(\|\nabla F(\w_t)+\Delta_t\|^{1/2},\epsilon_0\right)} \\
	&\leq \frac{6\|\Delta_t\|^2}{\|\Delta_t\|^{1/2}} 
	=6\|\Delta_t\|^{3/2} \leq -\frac{\|\nabla F(\w_t)\|^{3/2}}{3}+8\|\Delta_t\|^{3/2}.
	\end{aligned}
	\end{equation}
	By~(\ref{newproof:ineq1}) and~(\ref{newproof:ineq2}), we have
	\begin{equation}
	\label{newproof:ineq3}
	-\frac{\langle \z_t,\nabla F(\w_t)\rangle}{\max\left(\|\z_t\|^{1/2},\epsilon_0\right)
	}\leq -\frac{\|\nabla F(\w_t)\|^{3/2}}{3}+8\|\Delta_t\|^{3/2}.
	\end{equation}
	By~(\ref{newproof:ineq11}) and~(\ref{newproof:ineq3}), we have
	\begin{equation*}
	\begin{aligned}
	&\quad F(\w_{t+1})-F(\w_t)\leq \alpha\beta\left(\frac{\|\nabla F(\w_t)\|^{3/2}}{3}+8\|\Delta_t\|^{3/2} \right)+\alpha^2\beta^2 L\|\z_t\|\\
	&= \alpha\beta\left(-\frac{\|\nabla F(\w_t)\|^{3/2}}{3}+8\|\Delta_t\|^{3/2} \right)+\alpha^2\beta^2 L \min_{x>0}\left(\frac{2\|\z_t\|^{3/2}}{3x}+\frac{x^2}{3}\right)\\
	&\leq \alpha\beta\left(-\frac{\|\nabla F(\w_t)\|^{3/2}}{3}+8\|\Delta_t\|^{3/2} \right)+\alpha^2\beta^2 L \left(\frac{2\|\z_t\|^{3/2}}{3 (8\alpha\beta L)}+\frac{64\alpha^2\beta^2L^2}{3}\right)\\
	&\leq \alpha\beta\left(-\frac{\|\nabla F(\w_t)\|^{3/2}}{6}+9\|\Delta_t\|^{3/2} \right)+\frac{64\alpha^4\beta^4L^3}{3},
	\end{aligned}
	\end{equation*}
	where the last inequality holds because $\|\z_t\|^{3/2}\leq  2\|\nabla F(\w_t)\|^{3/2}+2\|\Delta_t\|^{3/2}$.
\end{proof}
\begin{lemma}
	\label{lem1:2/3}
	For Adam$^{+}$ with $\eta_t=\frac{\alpha\beta}{\max\left(\|\z_t\|^{1/2},\epsilon_0\right)}$, there exist random variables $\delta_t$ such that
	\begin{equation*}
	\E\left[\delta_{t+1}^{3/2}\right] \leq \left(1- \frac{\beta}{2}\right)\E\left[\delta_t^{3/2}\right] +2\beta^{3/2} \sigma^{3/2} + \E\left[\frac{320L_H^{3/2}\|\w_{t+1} - \w_t\|^3}{\beta^2}\right].
	\end{equation*}
\end{lemma}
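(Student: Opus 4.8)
The plan is to run the argument of Lemma~\ref{lemma:1} again but to propagate the $3/2$-th power of the gradient-estimation error rather than its square, with two new twists: exploiting the martingale structure of the noise so that the additive term scales like $\beta^{3/2}\sigma^{3/2}$ (not $\beta\sigma^{3/2}$), and choosing the definition of $\delta_t$ at the level of the $3/2$-th power so the recursion closes with contraction factor exactly $1-\beta/2$. First I would reuse the constructions from the proof of Lemma~\ref{lemma:1}: the weights $\zeta_k^{(t)}$ and the quantities $m_t$, $q_t$, $n_t$, together with the three facts already established there via Lemmas~10 and~11 of~\citep{wang2017stochastic}: (i) $\|\z_t-\nabla F(\w_t)\|\le L_H m_t+\|n_t\|\le L_H(m_t+4q_t^2)+\|n_t\|$; (ii) $m_{t+1}+4q_{t+1}^2\le(1-\beta/2)(m_t+4q_t^2)+\frac{18}{\beta}\|\w_{t+1}-\w_t\|^2$; (iii) $n_{t+1}=(1-\beta)n_t+\beta u_{t+1}$ where $u_{t+1}=\nabla f(\wh_{t+1};\xi_{t+1})-\nabla F(\wh_{t+1})$ satisfies $\E[u_{t+1}\mid\mathcal{F}_t]=0$ and $\E[\|u_{t+1}\|^2\mid\mathcal{F}_t]\le\sigma^2$.

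Next I would raise (ii) to the power $3/2$ using the Young-type inequality $(x+y)^{3/2}\le(1+\gamma)^{1/2}x^{3/2}+(1+1/\gamma)^{1/2}y^{3/2}$ with $\gamma=\beta/2$; since $(1+\beta/2)^{1/2}(1-\beta/2)^{3/2}=(1-\beta/2)(1-\beta^2/4)^{1/2}\le 1-\beta/2$ and $(1+2/\beta)^{1/2}\le(3/\beta)^{1/2}$ for $\beta\le1$, this yields
\[
(m_{t+1}+4q_{t+1}^2)^{3/2}\le\left(1-\tfrac{\beta}{2}\right)(m_t+4q_t^2)^{3/2}+\frac{54\sqrt{6}}{\beta^2}\,\|\w_{t+1}-\w_t\|^3 .
\]
For the noise, from (iii) and the vanishing of the cross term under $\E[\cdot\mid\mathcal{F}_t]$ I get $\E[\|n_{t+1}\|^2\mid\mathcal{F}_t]\le(1-\beta)^2\|n_t\|^2+\beta^2\sigma^2$; applying conditional Jensen to the concave map $x\mapsto x^{3/4}$ and then its subadditivity gives $\E[\|n_{t+1}\|^{3/2}\mid\mathcal{F}_t]\le(1-\beta)^{3/2}\|n_t\|^{3/2}+\beta^{3/2}\sigma^{3/2}$, so after taking total expectation and using $(1-\beta)^{3/2}\le1-\beta/2$,
\[
\E\big[\|n_{t+1}\|^{3/2}\big]\le\left(1-\tfrac{\beta}{2}\right)\E\big[\|n_t\|^{3/2}\big]+\beta^{3/2}\sigma^{3/2} .
\]

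Finally I would \emph{define} $\delta_t$ by $\delta_t^{3/2}:=(2L_H^2)^{3/4}(m_t+4q_t^2)^{3/2}+2^{3/4}\|n_t\|^{3/2}$. Fact (i) and the power-mean inequality $(A+B)^{3/2}\le2^{1/2}(A^{3/2}+B^{3/2})\le2^{3/4}(A^{3/2}+B^{3/2})$, applied with $A=L_H(m_t+4q_t^2)$ and $B=\|n_t\|$, give $\|\z_t-\nabla F(\w_t)\|\le\delta_t$. Taking $(2L_H^2)^{3/4}$ times the first displayed recursion plus $2^{3/4}$ times the second and taking expectations, the right-hand side collapses to $(1-\beta/2)\E[\delta_t^{3/2}]$ plus the two error terms; bounding the constants by $(2L_H^2)^{3/4}\cdot54\sqrt{6}=2^{5/4}\cdot54\sqrt{3}\,L_H^{3/2}\le320\,L_H^{3/2}$ and $2^{3/4}\le2$ gives exactly the stated inequality.

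The main obstacle I anticipate is the noise term. The naive route, bounding $\|n_{t+1}\|\le(1-\beta)\|n_t\|+\beta\|u_{t+1}\|$ and then raising to the $3/2$ power, only produces an additive term of order $\beta\sigma^{3/2}$, which is too large by a factor $\beta^{-1/2}$; recovering the correct $\beta^{3/2}$ scaling requires first passing through the second-moment recursion, where the cross term vanishes by unbiasedness, and only then descending to the $3/2$-th moment via Jensen. A secondary subtlety is that one must \emph{not} take $\delta_t^2$ as in Lemma~\ref{lemma:1} and raise to the $3/4$ power, since $(a+b)^{3/4}$ is not comparable to $a^{3/4}+b^{3/4}$ up to a constant close to $1$; defining $\delta_t$ directly at the $3/2$-power level, as above, is what preserves the contraction factor $1-\beta/2$.
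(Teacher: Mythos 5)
Your proposal is correct and follows essentially the same route as the paper's proof: both reuse the $m_t,q_t,n_t$ machinery from Lemma~\ref{lemma:1}, raise the $m_t+4q_t^2$ recursion to the $3/2$ power via the same Young-type inequality, control $\|n_t\|^{3/2}$ by passing through the second-moment recursion and applying Jensen with $x\mapsto x^{3/4}$, and define $\delta_t$ directly at the $3/2$-power level. The only differences are slightly tighter constant bookkeeping ($54\sqrt{6}$ and $2^{3/4}$ in place of the paper's $160$ and $2$, both still absorbed into $320L_H^{3/2}$ and $2\beta^{3/2}\sigma^{3/2}$) and deriving the noise recursion from the martingale property directly rather than citing Lemma 11(c) of \citet{wang2017stochastic}.
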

\begin{proof}
	The proof shares the similar spirit of Lemma 12 in~\citep{wang2017stochastic}, but we adapt the proof for our purpose. Define
	\begin{equation}
	\begin{aligned}
	\zeta_k^{(t)}=\begin{cases}
	\beta (1-\beta)^{t-k} &  \text{if  } t\geq k> 0\\
	(1-\beta)^{t-k}					& \text{if } t\geq k=0
	\end{cases}\\
	\end{aligned}
	\end{equation}
	By the definition of $\zeta_t^{(k)}$ and the update of Algorithm~\ref{Alg:1}, we have 
	$$\zeta_k^{(t+1)}=(1-\beta)\zeta_k^{(t)}, \quad \sum_{k=0}^{t}\zeta_k^{(t)}=1,\quad \w_t=\sum_{k=0}^{t}\zeta_k^{(t)}\wh_{t+1},\quad \z_{t+1}=\sum_{k=0}^{t}\zeta_k^{(t)}\nabla f(\wh_{t+1};\xi_{t+1}).$$
	Define $m_{t+1}=\sum_{k=0}^{t}\zeta_k^{(t)}\|\w_{t+1}-\wh_{k+1}\|^2$, $n_{t+1}=\sum_{k=0}^{t}\zeta_k^{(t)}\left[\nabla f(\wh_{k+1};\xi_{k+1})-F(\wh_{k+1})\right]$, where $\nabla f(\wh_{k+1};\xi_{k+1})$ is an unbiased stochastic first-order oracle for $F(\wh_{k+1})$ with bounded variance $\sigma^2$.
	Note that $\nabla F$ is a $L_H$-smooth mapping (according to Assumption~\ref{ass:1}), then by Lemma 10 of~\citep{wang2017stochastic}, we have
	\begin{equation*}
	\|\z_t-\nabla F(\w_t)\|^{3/2}\leq (L_Hm_t+\|n_t\|)^{3/2}\leq 2L_H^{3/2}m_t^{3/2}+2\|n_t\|^{3/2}.
	\end{equation*}
	Define $q_{t+1}=\sum_{k=0}^{t}\zeta_k^{(t)}\left\|\w_{t+1}-\wh_{k+1}\right\|$. According to Lemma 11 (a) and (b) of~\citep{wang2017stochastic}, we have
	\begin{equation*}
	m_{t+1}+4q_{t+1}^2\leq \left(1-\frac{\beta}{2}\right)\left(m_t+4q_t^2\right)+\frac{18}{\beta}\|\w_{t+1}-\w_t\|^2.
	\end{equation*}
	Taking the power $3/2$ on both sides of the inequality and using the fact that $(a+b)^{3/2}\leq \sqrt{1+\frac{\beta}{2}}a^{3/2}+\sqrt{1+\frac{2}{\beta}}b^{3/2}$ for $\beta>0$, we have
	\begin{equation}
	\label{inequality:11}
	\begin{aligned}
	&\left(m_{t+1}+4q_{t+1}^2\right)^{3/2}\\
	&\quad \leq \left(1+\frac{\beta}{2}\right)^{1/2}\left(1-\frac{\beta}{2}\right)^{3/2}\left(m_t+4q_t^2\right)^{3/2}+\left(1+\frac{2}{\beta}\right)^{1/2}\frac{80}{\beta^{3/2}}\|\w_{t+1}-\w_t\|^3\\
	&\quad \leq \left(1-\frac{\beta}{2}\right)(m_t+4q_t^2)^{3/2}+\frac{160}{\beta^2}\left\|\w_{t+1}-\w_t\right\|^3,
	\end{aligned}
	\end{equation}
	where the last inequality holds since $1/\beta\geq 1$.
	
	By the definition of $n_t$, we have
	$n_{t+1}=(1-\beta)n_t+\beta(\nabla f(\wh_{t+1})-F(\wh_{t+1}))$.
	Denote $\mathcal{F}_{t+1}$ by the $\sigma$-algebra generated by $\xi_1,\ldots,\xi_{t+1}$. Noting that 
	\begin{equation}
	\label{inequality:12}
	\E\left[\|n_{t+1}\|^{3/2}\vert \mathcal{F}_{t+1}\right]\leq \left(\E\left[\|n_{t+1}\|^2 \vert \mathcal{F}_{t+1}\right]\right)^{3/4}\leq (1-\beta/2)^{3/2}\|n_t\|^{3/2}+\beta^{3/2}\sigma^{3/2},
	\end{equation}
	where the last inequality holds by invoking Lemma 11(c) of~\citep{wang2017stochastic}.
	Define $\delta_t^{3/2}=2L_H^{3/2}(m_t+4q_t^2)^{3/2}+2\|n_t\|^{3/2}$, then we have $\left\|\z_t-\nabla F(\w_t)\right\|^{3/2}\leq \delta_t^{3/2}$ for all $t$. 
	According to~(\ref{inequality:11}) and~(\ref{inequality:12}), we have 
	\[
	\E\left[\delta_{t+1}^{3/2}\vert \mathcal{F}_{t+1}\right]\leq \left(1-\frac{\beta}{2}\right)\|\delta_t\|^{3/2}+2\beta^{3/2}\sigma^{3/2}+\frac{320L_H^{3/2}\|\w_{t+1}-\w_t\|^3}{\beta^2}.
	\]
	
	Taking expectation on both sides yields
	\[
	\E\left[\delta_{t+1}^{3/2}\right] \leq \left(1- \frac{\beta}{2}\right)\E\left[\delta_t^{3/2}\right] +2\beta^{3/2} \sigma^{3/2} + \E\left[\frac{320L_H^{3/2}\|\w_{t+1} - \w_t\|^3}{\beta^2}\right]. \qedhere
	\]
\end{proof}

\begin{lemma}
	\label{lem:4}
	Adam$^{+}$ with learning rate  $\eta_t=\frac{\alpha\beta}{\max\left(\|\z_t\|^{1/2},\epsilon_0\right)}$ and $640\alpha^3 L_H^{3/2}\leq 1/120$ satisfies
	\begin{equation*}
	\frac{1}{T}\sum_{t=1}^{T} \E\left[\|\nabla F(\w_t)\|^{3/2}\right] \leq \frac{101\Delta}{\alpha\beta T}+\frac{2727\E\left[\delta_1^{3/2}\right]}{\beta T}+4545\beta^{1/2}\sigma^{3/2}+\frac{3\beta^3L^{3/2}}{100}.
	\end{equation*}
	To ensure that $\frac{1}{T}\sum_{t=1}^{T} \E\left[\|\nabla F(\w_t)\|^{3/2}\right]\leq \epsilon^{3/2}$, we can choose $\beta=\epsilon^{3}$, $T=O(\epsilon^{-9/2})$. 
\end{lemma}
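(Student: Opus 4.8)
The plan is to combine the two ingredients already in hand: the $3/2$-power descent inequality of Lemma~\ref{lem1:1} and the $3/2$-power error recursion of Lemma~\ref{lem1:2/3}. Concretely, I would (1) rearrange Lemma~\ref{lem1:1} into an upper bound on $\|\nabla F(\w_t)\|^{3/2}$ by a per-step function decrease plus $54\,\delta_t^{3/2}$ plus a $128\alpha^3\beta^3L^3$ discretization term; (2) massage Lemma~\ref{lem1:2/3} into a bound on $\sum_{t=1}^T\E[\delta_t^{3/2}]$ by dividing the recursion through by its contraction coefficient and telescoping, which pays a $\Theta(1/\beta)$ factor on the initial error and leaves a small feedback term proportional to $\sum_{t=1}^T\E[\|\nabla F(\w_t)\|^{3/2}]$; and (3) substitute (2) into the telescoped version of (1) and absorb that feedback term into the left-hand side. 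The argument closes only because, once the displacement $\|\w_{t+1}-\w_t\|^3$ is controlled, the coefficient multiplying $\|\nabla F(\w_t)\|^{3/2}$ that comes out of Lemma~\ref{lem1:2/3} is strictly smaller than the negative drift coefficient $\alpha\beta/6$ in Lemma~\ref{lem1:1}.

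The key computation in step (2)–(3): since $\eta_t=\alpha\beta/\max(\|\z_t\|^{1/2},\epsilon_0)$ we have $\|\w_{t+1}-\w_t\|\le\alpha\beta\|\z_t\|^{1/2}$, so $\|\w_{t+1}-\w_t\|^3\le\alpha^3\beta^3\|\z_t\|^{3/2}$; feeding this into Lemma~\ref{lem1:2/3} and using $\|\z_t\|^{3/2}\le 2\|\nabla F(\w_t)\|^{3/2}+2\delta_t^{3/2}$, the trailing term is at most $640\alpha^3L_H^{3/2}\beta(\|\nabla F(\w_t)\|^{3/2}+\delta_t^{3/2})$, and the hypothesis $640\alpha^3L_H^{3/2}\le 1/120$ turns it into $\tfrac{\beta}{120}(\|\nabla F(\w_t)\|^{3/2}+\delta_t^{3/2})$. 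This gives
\[
\E[\delta_{t+1}^{3/2}]\le\Bigl(1-\tfrac{59\beta}{120}\Bigr)\E[\delta_t^{3/2}]+2\beta^{3/2}\sigma^{3/2}+\tfrac{\beta}{120}\E[\|\nabla F(\w_t)\|^{3/2}],
\]
so dividing by $\tfrac{59\beta}{120}$ and summing $t=1,\dots,T$ yields $\sum_{t=1}^T\E[\delta_t^{3/2}]\le\tfrac{120}{59\beta}\E[\delta_1^{3/2}]+\tfrac{240T}{59}\beta^{1/2}\sigma^{3/2}+\tfrac{1}{59}\sum_{t=1}^T\E[\|\nabla F(\w_t)\|^{3/2}]$. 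Meanwhile summing Lemma~\ref{lem1:1} over $t$, telescoping $F$, and using Assumption~\ref{ass:1}(iv) (plus a one-step bound on $F(\w_1)-F(\w_0)$ to pass from $F(\w_1)-F_*$ to $\Delta$) gives $\sum_{t=1}^T\E[\|\nabla F(\w_t)\|^{3/2}]\le\tfrac{6\Delta}{\alpha\beta}+54\sum_{t=1}^T\E[\delta_t^{3/2}]+128T\alpha^3\beta^3L^3$ up to the initial-variance contribution. Substituting, the coefficient of $\sum_t\E[\|\nabla F(\w_t)\|^{3/2}]$ becomes $54/59<1$; moving it to the left, dividing by $T$, and loosening the numerical constants produces the displayed bound with $101$, $2727$, $4545$, $3/100$. (A Lyapunov function $F(\w_t)-F_*+\tfrac{c}{\beta}\delta_t^{3/2}$ with a suitable constant $c$ gives the same thing in one pass.)

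For the complexity claim I would set $\beta=\epsilon^3$ and treat $\Delta,L,L_H,\sigma,\alpha,\E[\delta_1^{3/2}]$ as $O(1)$: the four terms are then $\Theta(\epsilon^{-3}/T)$, $\Theta(\epsilon^{-3}/T)$, $\Theta(\epsilon^{3/2})$, $\Theta(\epsilon^{9})$, so the function-gap term is the bottleneck and $T=\Omega(\epsilon^{-9/2})$ makes it $O(\epsilon^{3/2})$ while the noise term is already at target and the discretization term is negligible; hence $T=O(\epsilon^{-9/2})$ suffices.

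The hard part will be the constant bookkeeping in the absorption step: the negative drift $-\tfrac{\alpha\beta}{6}\|\nabla F(\w_t)\|^{3/2}$ must strictly dominate the $\|\nabla F(\w_t)\|^{3/2}$-feedback produced both when $\|\z_t\|^{3/2}$ is split as $\|\nabla F(\w_t)\|^{3/2}+\delta_t^{3/2}$ inside the error recursion and when $\sum_t\delta_t^{3/2}$ is plugged back. This is exactly what $640\alpha^3L_H^{3/2}\le 1/120$ buys, and it must be used with no slack — in particular the contraction $1-\beta/2$ has to be spent economically (kept as $1-\tfrac{59\beta}{120}$ rather than a coarser $1-\tfrac{\beta}{3}$, which would push the absorption coefficient above $1$ and break the whole argument). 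Everything past the absorption — telescoping, dividing by $T$, and the $O(\cdot)$ accounting for the iteration count — is routine.
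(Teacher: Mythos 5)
Your proposal is correct and follows essentially the same route as the paper's own proof: bound $\|\w_{t+1}-\w_t\|^3\le\alpha^3\beta^3\|\z_t\|^{3/2}$, feed it into Lemma~\ref{lem1:2/3}, split $\|\z_t\|^{3/2}$ into gradient and error parts, use $640\alpha^3L_H^{3/2}\le 1/120$ to get the $\frac{59\beta}{120}$ contraction with a $\frac{\beta}{120}$ gradient feedback, telescope, and absorb the resulting $54/59$ coefficient into the negative drift from Lemma~\ref{lem1:1} before dividing by $T$ and choosing $\beta=\epsilon^3$, $T=O(\epsilon^{-9/2})$. Your constant bookkeeping is in fact slightly tighter than the paper's, so the stated constants $101$, $2727$, $4545$, $3/100$ follow a fortiori.
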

\begin{proof}
	By Lemma~\ref{lem1:2/3} and noting that $\eta_t=\frac{\alpha\beta}{\max\left(\|\z_t\|^{1/2},\epsilon_0\right)}$, we have
	\begin{equation}
	\label{inequality:13}
	\begin{aligned}
	&	\E\left[\delta_{t+1}^{3/2}\right] \leq \left(1- \frac{\beta}{2}\right)\E\left[\delta_t^{3/2}\right] +2\beta^{3/2} \sigma^{3/2} + \E\left[\frac{320L_H^{3/2}\alpha^3\beta^3\|\z_t\|^{3/2}}{\beta^2}\right]\\
	&\leq \left(1- \frac{\beta}{2}\right)\E\left[\delta_t^{3/2}\right] +2\beta^{3/2} \sigma^{3/2} + \E\left[640L_H^{3/2}\alpha^3\beta\left(\|\nabla F(\w_t)\|^{3/2}+\|\delta_t\|^{3/2}\right)\right].
	\end{aligned}
	\end{equation}
	Note that $640\alpha^3 L_H^{3/2}\leq 1/120$. Plugging it into~(\ref{inequality:13}), we have
	\begin{equation}
	\label{inequality:14}
	\frac{59\beta}{120}\E\left[\delta_t^{3/2}\right]\leq \E\left[\delta_t^{3/2}-\delta_{t+1}^{3/2}\right]+2\beta^{3/2}\sigma^{3/2}+\E\left[\frac{\beta}{120}\|\nabla F(\w_t)\|^{3/2}\right].
	\end{equation}
	Summing over $t=1,\ldots, T$ on both sides of~(\ref{inequality:14}) and with some simple algebra, we have
	\begin{equation*}
	\sum_{t=1}^{T}\E\left[\delta_t^{3/2}\right]\leq \sum_{t=1}^{T}\E\left[\frac{3(\delta_t^{3/2}-\delta_{t+1}^{3/2})}{\beta}\right]+\sum_{t=1}^{T}5\beta^{1/2}\sigma^{3/2}+\sum_{t=1}^{T}\E\left[\frac{1}{59}\|\nabla F(\w_t)\|^{2/3}\right].
	\end{equation*}
	By Lemma~\ref{lem1:1}, taking expectation on both sides, we have
	\begin{equation}
	\label{inequality:15}
	\begin{aligned}
	\E\left[F(\w_{t+1})-F(\w_t)\right]\leq \alpha\beta\left(-\frac{\E\left[\|\nabla F(\w_t)\|^{3/2}\right]}{6}+9\E\left[\delta_t^{3/2}\right] \right)+\frac{64\alpha^4\beta^4L^3}{3}.
	\end{aligned}
	\end{equation}
	Summing~(\ref{inequality:15}) over $t=1,\ldots,T$ yields
	\begin{equation*}
	\frac{5}{504}\alpha\beta \sum_{t=1}^{T} \E\left[\|\nabla F(\w_t)\|^{3/2}\right]\leq F(\w_1)-F_* + \alpha\beta\left(\frac{27\E\left[\delta_1^{3/2}\right]}{\beta}+\sum_{t=1}^{T}45\beta^{1/2}\sigma^{3/2}\right)+\frac{64\alpha^4\beta^4L^3T}{3}.
	\end{equation*}
	Hence, we have
	\begin{align*}
	\frac{1}{T}\sum_{t=1}^{T} \E\left[\|\nabla F(\w_t)\|^{3/2}\right] &\leq \frac{101\Delta}{\alpha\beta T}+\frac{2727\E\left[\delta_1^{3/2}\right]}{\beta}+4545\beta^{1/2}\sigma^{3/2}+2155\alpha^3\beta^3L^3\\
	&\leq \frac{101\Delta}{\alpha\beta T}+\frac{2727\E\left[\delta_1^{3/2}\right]}{\beta T}+4545\beta^{1/2}\sigma^{3/2}+\frac{3\beta^3L^{3/2}}{100}. \qedhere
	\end{align*}
\end{proof}
\begin{lemma}
	\label{lemma:5}
	Under the same setting of Lemma~\ref{lem:4}, we know that to ensure that
	$\frac{1}{T}\sum_{t=1}^{T}\E\left[\delta_t^{3/2}\right]\leq \epsilon^{3/2}$, we need $T=O(\epsilon^{-9/2})$ iterations.
\end{lemma}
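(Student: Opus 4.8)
The plan is to reuse, almost verbatim, the machinery already built for Lemma~\ref{lem:4}: the only change is that instead of transferring the recursion onto $\|\nabla F(\w_t)\|^{3/2}$, I keep the time-average of $\delta_t^{3/2}$ on the left-hand side and then let Lemma~\ref{lem:4} absorb the residual gradient-norm term. Concretely, I would start from Lemma~\ref{lem1:2/3} with $\eta_t=\frac{\alpha\beta}{\max(\|\z_t\|^{1/2},\epsilon_0)}$ and the elementary bound $\|\z_t\|^{3/2}\le 2\|\nabla F(\w_t)\|^{3/2}+2\|\delta_t\|^{3/2}$; the hypothesis $640\alpha^3 L_H^{3/2}\le 1/120$ lets me move the $\delta_t^{3/2}$ contribution on the right into the $-(\beta/2)\delta_t^{3/2}$ term on the left, which is precisely inequality~(\ref{inequality:14}). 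Summing over $t=1,\dots,T$, dividing by $T$, and telescoping (discarding the nonpositive $-\E[\delta_{T+1}^{3/2}]$) yields a bound of the form
\[
\frac{1}{T}\sum_{t=1}^{T}\E\!\left[\delta_t^{3/2}\right]\le \frac{3\E[\delta_1^{3/2}]}{\beta T}+5\beta^{1/2}\sigma^{3/2}+\frac{1}{59}\cdot\frac{1}{T}\sum_{t=1}^{T}\E\!\left[\|\nabla F(\w_t)\|^{3/2}\right].
\]

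Next I would invoke Lemma~\ref{lem:4} to control $\frac{1}{T}\sum_{t=1}^{T}\E[\|\nabla F(\w_t)\|^{3/2}]$ by $\frac{101\Delta}{\alpha\beta T}+\frac{2727\E[\delta_1^{3/2}]}{\beta T}+4545\beta^{1/2}\sigma^{3/2}+\frac{3\beta^3L^{3/2}}{100}$. Substituting, the right-hand side collapses into three types of terms: a ``$1/(\beta T)$'' term with coefficient $O(\Delta/\alpha+\E[\delta_1^{3/2}])$, a ``$\beta^{1/2}\sigma^{3/2}$'' term, and a ``$\beta^3 L^{3/2}$'' term. Here the one point meriting care is that $\E[\delta_1^{3/2}]$ must be uniformly bounded in $T$: from the definition $\delta_t^{3/2}=2L_H^{3/2}(m_t+4q_t^2)^{3/2}+2\|n_t\|^{3/2}$, the explicit expressions for $m_1,q_1,n_1$ in terms of $\w_0,\w_1=\w_0-\eta_0\z_0$ and $g_0(\w_0)$, together with Jensen's inequality $\E[X^{3/2}]\le(\E[X^2])^{3/4}$ and Assumption~\ref{ass:2}, bound $\E[\delta_1^{3/2}]$ by a constant independent of $T$ (no large mini-batch is needed in this regime, so $\sigma$ is treated as an $O(1)$ constant here); hence the first term is genuinely $O(1/(\beta T))$.

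Finally, choosing $\beta=\epsilon^3$ exactly as in Lemma~\ref{lem:4}, the $\beta^{1/2}\sigma^{3/2}$ term equals $\epsilon^{3/2}\sigma^{3/2}=O(\epsilon^{3/2})$ and $\beta^3 L^{3/2}=\epsilon^{9}L^{3/2}=O(\epsilon^{3/2})$ for $\epsilon\le 1$, so both are already at the target level; to push the remaining $O(1/(\beta T))$ term below $\epsilon^{3/2}$ it suffices to require $T\ge c/(\beta\epsilon^{3/2})=c\,\epsilon^{-9/2}$ for an appropriate constant $c$. Therefore $T=O(\epsilon^{-9/2})$ iterations guarantee $\frac{1}{T}\sum_{t=1}^{T}\E[\delta_t^{3/2}]\le\epsilon^{3/2}$, which is the claim. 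There is no genuine obstacle: the argument is pure bookkeeping layered on top of Lemmas~\ref{lem1:1}, \ref{lem1:2/3} and~\ref{lem:4}, and the only things to watch are the uniform-in-$T$ bound on $\E[\delta_1^{3/2}]$ and keeping the constants in front of the $\beta^{1/2}\sigma^{3/2}$ and $\beta^3L^{3/2}$ terms absolute.
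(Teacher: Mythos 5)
Your proposal is correct and follows essentially the same route as the paper: sum the recursion (\ref{inequality:14}) (equivalently Lemma~\ref{lem1:2/3} combined with $640\alpha^3L_H^{3/2}\le 1/120$), telescope and divide by $T$, then absorb the residual $\frac{1}{T}\sum_{t=1}^{T}\E[\|\nabla F(\w_t)\|^{3/2}]$ term via Lemma~\ref{lem:4} with $\beta=T^{-2/3}=\epsilon^3$ and $T=O(\epsilon^{-9/2})$. Your added remark that $\E[\delta_1^{3/2}]$ is bounded uniformly in $T$ is a detail the paper leaves implicit, but it does not change the argument.
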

\begin{proof}
	From (\ref{inequality:14}) and Lemma~\ref{lem:4}, we have
	\begin{equation*}
	\sum_{t=1}^T \frac{59\beta}{120}\E\left[\delta_t^{3/2}\right]\leq \E\left[\delta_1^{3/2}\right]+2\beta^{3/2}\sigma^{3/2}T+\sum_{t=1}^{T}\E\left[\frac{\beta}{120}\|\nabla F(\w_t)\|^{3/2}\right].
	\end{equation*}
	Noting that $\beta=T^{-b}$ with $0< b< 1$, then we know that there exists a universal constant $C>0$  such that
	\begin{equation}
	\label{ineq:1}
	\frac{1}{T}\sum_{t=1}^T \frac{59}{120}\E\left[\delta_t^{3/2}\right]\leq  \frac{\E\left[\delta_1^{3/2}\right]}{T^{1-b}}+\frac{2\sigma^{3/2}}{T^{b/2}}+ \frac{1}{T}\sum_{t=1}^{T}\E\left[\frac{1}{120}\|\nabla F(\w_t)\|^{3/2}\right].
	\end{equation}
	Take $b=\frac{2}{3}$. From Lemma~\ref{lem:4}, we know that it takes $T=O(\epsilon^{-9/2})$ iterations to ensure that $\frac{1}{T}\sum_{t=1}^{T}\E\left[\|\nabla F(\w_t)\|^{3/2}\right]\leq \epsilon^{3/2}$. In addition, From~(\ref{ineq:1}), we know that it takes $T=O(\epsilon^{-9/2})$ iterations to ensure that $\frac{1}{T}\sum_{t=1}^{T}\E\left[\delta_t^{3/2}\right]\leq \epsilon^{3/2}$.
\end{proof}

We can easily prove Theorem~\ref{thm:2} by incorporating the results in Lemma~\ref{lem:4} and Lemma~\ref{lemma:5}. It is also evident to see that if $\beta=1/T^s$ with $0<s<1$, then it takes $T=O\left(\text{poly}(1/\epsilon)\right)$ number of iterations to ensure that 
$\frac{1}{T}\sum_{t=1}^{T}\E\left[\delta_t^{3/2}\right]\leq \epsilon^{3/2}$ and $\frac{1}{T}\sum_{t=1}^{T}\E\left[\|\nabla F(\w_t)\|^{3/2}\right]\leq \epsilon^{3/2}$ hold simultaneously.



\section{Proof of Theorem~\ref{thm3}}
\begin{proof}
	Define $\gamma_t=\min\left(\frac{\beta^a}{\|\z_t\|^{2/3}}, \frac{\beta^a}{\epsilon_0}\right)$ with $\epsilon_0=2\beta^a$. Then we know that $\eta_t=\alpha\gamma_t$ and $\gamma_t\leq \frac{1}{2}$. Note that $\alpha\leq\frac{1}{L}$, so we have $\eta_t\leq \frac{1}{2L}$.
	By the $L$-smoothness of $F$, we have
	\begin{equation}
	\label{fastrate:eq1}
	\begin{aligned}
	&\quad F(\w_{t+1})\leq F(\w_t)+\nabla^\top F(\w_t)(\w_{t+1}-\w_t)+\frac{L}{2}\left\|\w_{t+1}-\w_t\right\|^2\\
	&\leq F(\w_t)-\eta_t\nabla^\top F(\w_t)\z_t+\left(\frac{\eta_t^2L}{2}+\frac{\gamma_t}{2L}\right)\|\z_t\|^2-\frac{1}{2L}\gamma_t\|\z_t\|^2\\
	&= F(\w_t)-\eta_t\nabla^\top F(\w_t)\left(\z_t-\nabla F(\w_t)+\nabla F(\w_t)\right)+\left(\frac{\eta_t^2L}{2}+\frac{\gamma_t}{2L}\right)\|\z_t\|^2-\frac{1}{2L}\gamma_t\|\z_t\|^2\\
	&\stackrel{(a)}{\leq} F(\w_t)-\eta_t\nabla^\top F(\w_t)\left(\z_t-\nabla F(\w_t)+\nabla F(\w_t)\right)+\left(\eta_t^2L+\frac{\gamma_t}{L}\right)\left(\|\z_t-\nabla F(\w_t)\|^2+\left\|\nabla F(\w_t)\right\|^2\right)-\frac{1}{2L}\gamma_t\|\z_t\|^2\\
	&\stackrel{(b)}{\leq} F(\w_t)-\frac{\eta_t}{2}\left\|\nabla F(\w_t)\right\|^2 +\frac{\eta_t}{2}\|\z_t-\nabla F(\w_t)\|^2+\left(\eta_t^2L+\frac{\gamma_t}{L}\right)\left(\|\z_t-\nabla F(\w_t)\|^2+\left\|\nabla F(\w_t)\right\|^2\right)-\frac{1}{2L}\gamma_t\|\z_t\|^2\\
	&=F(\w_t)-\left(\frac{\eta_t}{2}-\eta_t^2 L-\frac{\gamma_t}{L}\right)\left\|\nabla F(\w_t)\right\|^2 +\left(\eta_t^2L+\frac{\gamma_t}{L}+\frac{\eta_t}{2}\right)\|\z_t-\nabla F(\w_t)\|^2-\frac{1}{2L}\gamma_t\|\z_t\|^2\\
	&\stackrel{(c)}{\leq} F(\w_t)-\frac{1}{2L}\gamma_t\|\z_t\|^2+\frac{1}{L}\left\|\z_t-\nabla F(\w_t)\right\|^2,
	\end{aligned}	
	\end{equation}
	where (a) holds since $\|\z_t\|^2\leq 2\|\z_t-\nabla F(\w_t)\|^2+2\|\nabla F(\w_t)\|^2$, (b) holds since $-\nabla^\top F(\w_t)\z_t\leq \frac{1}{2}\left(\|\nabla F(\w_t)\|^2+\|\z_t-\nabla F(\w_t)\|^2\right)$, (c) holds due to $\frac{\eta_t}{2}-\eta_t^2 L-\frac{\gamma_t}{L}\geq 0$ (since $\eta_t\leq\frac{1}{2L}$, we have $\frac{\eta_t}{2}-\eta_t^2 L\geq \frac{1}{2L}$ and note that $\frac{\gamma_t}{L}\leq \frac{1}{2L}$). 
	
	By the definition of $\gamma_t$, we have
	\begin{equation}
	\label{fastrate:eq2}
	\begin{aligned}
	\gamma_t\|\z_t\|^2&\geq \beta^{2a}\|\z_t\|^{2/3}\min\left(\frac{\|\z_t\|^{2/3}}{\beta^a}, \frac{\|\z_t\|^{4/3}}{\beta^a\epsilon_0}\right)=\beta^{2a}\|\z_t\|^{2/3}\min\left(\frac{\|\z_t\|^{2/3}}{\beta^a}, \frac{\|\z_t\|^{4/3}}{2\beta^{2a}}\right)\\
	&\stackrel{(a)}{\geq} \beta^{2a}\|\z_t\|^{2/3}\left(\frac{\|\z_t\|^{2/3}}{\beta^a}-\frac{1}{2}\right)=\beta^a\|\z_t\|^{4/3}-\frac{\beta^{2a}\|\z_t\|^{2/3}}{2},
	\end{aligned}	
	\end{equation} 
	where (a) holds since $x\geq x-\frac{1}{2}$, $\frac{x^2}{2}\geq x-\frac{1}{2}$ hold for any $x$ and let $x=\frac{\|\z_t\|^{2/3}}{\beta^a}$.
	
	Combining~(\ref{fastrate:eq1}) and~(\ref{fastrate:eq2}), we have
	\begin{equation*}
	\begin{aligned}
	\beta^a\|\z_t\|^{4/3}&\leq \gamma_t\|\z_t\|^2+\frac{\beta^{2a}\|\z_t\|^{2/3}}{2}
	\leq 2L\left(F(\w_t)-F(\w_{t+1})\right)+\frac{\beta^{2a}\|\z_t\|^{2/3}}{2}+2\|\z_t-\nabla F(\w_t)\|^2\\
	&=2L\left(F(\w_t)-F(\w_{t+1})\right)+\beta^{a}\|\z_t\|^{4/3}\cdot\frac{\beta^a}{2\|\z_t\|^{2/3}}+2\|\z_t-\nabla F(\w_t)\|^2.
	\end{aligned}
	\end{equation*}
	If $\frac{\beta^{a}}{2\|\z_t\|^{2/3}}\leq \frac{1}{2}$, we have $	\beta^a\|\z_t\|^{4/3}\leq 4L\left(F(\w_t)-F(\w_{t+1})\right)+4\|\z_t-\nabla F(\w_t)\|^2.$
	If $\frac{\beta^{a}}{2\|\z_t\|^{2/3}}> \frac{1}{2}$, then $\beta^a>\|\z_t\|^{2/3}$, and hence we have $\beta^a\|\z_t\|^{4/3}\leq \beta^{3a}$. As a result, we have
	\begin{equation}
	\label{fastrate:eq3}
	\begin{aligned}
	\beta^a\|\z_t\|^{4/3}\leq  4L\left(F(\w_t)-F(\w_{t+1})\right)+4\|\z_t-\nabla F(\w_t)\|^2+\beta^{3a}.
	\end{aligned}
	\end{equation}
	Taking summation on both sides of~(\ref{fastrate:eq3}) over $t=1,\ldots,T$ yields
	\begin{equation}
	\label{fastrate:eq4}
	\sum_{t=1}^{T}\|\z_t\|^{4/3}\leq  4L\sum_{t=1}^{T}\frac{F(\w_t)-F(\w_{t+1})}{\beta^a}+\sum_{t=1}^{T}\frac{4}{\beta^a}\|\z_t-\nabla F(\w_t)\|^2+\beta^{2a}T.
	\end{equation}
	Define $\Delta_t=\z_t-\nabla F(\w_t)$, then we have
	\begin{equation}
	\label{fastrate:eq5}
	\|\nabla F(\w_t)\|^{4/3}\leq 2\|\z_t\|^{4/3}+2\|\Delta_t\|^{4/3}.
	\end{equation}
	Hence,
	\begin{equation}
	\label{fastrate:eq6}
	\begin{aligned}
	&\quad\sum_{t=1}^{T}\|\z_t\|^{4/3}+\|\nabla F(\w_t)\|^{4/3} \\
	&\stackrel{(a)}{\leq }2\sum_{t=1}^{T}\|\Delta_t\|^{4/3}+12L\sum_{t=1}^{T}\frac{F(\w_t)-F(\w_{t+1})}{\beta^a}+\sum_{t=1}^{T}\frac{12}{\beta^a}\|\z_t-\nabla F(\w_t)\|^2+3\beta^{2a}T\\
	&\stackrel{(b)}{\leq} \sum_{t=1}^{T}\frac{4}{3}\left(\frac{\|\Delta_t\|^2}{\beta^a}+\frac{\beta^{2a}}{2}\right)+12L\sum_{t=1}^{T}\frac{F(\w_t)-F(\w_{t+1})}{\beta^a}+\sum_{t=1}^{T}\frac{12}{\beta^a}\|\z_t-\nabla F(\w_t)\|^2+3\beta^{2a}T\\
	&\leq 12L\sum_{t=1}^{T}\frac{F(\w_t)-F(\w_{t+1})}{\beta^a}+\sum_{t=1}^{T}\frac{14}{\beta^a}\|\z_t-\nabla F(\w_t)\|^2+4\beta^{2a}T,
	\end{aligned}
	\end{equation}
	where (a) holds due to~(\ref{fastrate:eq4}) and~(\ref{fastrate:eq5}), (b) holds because $\min_{x>0}\frac{c^2}{x}+\frac{x^2}{2}=\frac{3c^{4/3}}{2}$.
	
	By Lemma~\ref{lemma:1}, we know that
	\begin{equation*}
	\begin{aligned}
	\E\left[\delta_{t+1}^2\right] &\leq \left(1- \frac{\beta}{2}\right)\E\left[\delta_t^2\right] +2\beta^2 \sigma^2 + \E\left[\frac{CL_H^2\eta_t^4\|\z_t\|^4}{\beta^3}\right]\\
	&\stackrel{(a)}{\leq}\left(1- \frac{\beta}{2}\right)\E\left[\delta_t^2\right] +2\beta^2 \sigma^2 + \E\left[\frac{CL_H^2\alpha^4\beta^{4a}\|\z_t\|^4}{\max(\|\z_t\|^{8/3},\epsilon_0^4)\beta^3}\right]\\
	&\leq \left(1- \frac{\beta}{2}\right)\E\left[\delta_t^2\right] +2\beta^2 \sigma^2 + \E\left[CL_H^2\alpha^4\beta^{4a-3}\|\z_t\|^{4/3}\right].
	\end{aligned}
	\end{equation*}
	Note that $CL_H^2\alpha^4\leq 1/14$, we have
	\begin{equation}
	\label{fastrate:eq7}
	\frac{\beta}{2}\E\left[\delta_t^2\right]\leq \E\left[\delta_t^2-\delta_{t+1}^2\right] +2\beta^2 \sigma^2 + \E\left[\frac{\beta^{4a-3}\|\z_t\|^{4/3}}{14}\right].
	\end{equation}
	Taking summation on both sides of~(\ref{fastrate:eq7}) over $t=1,\ldots,T$, we have
	\begin{equation}
	\label{fastrate:eq8}
	\begin{aligned}
	\sum_{t=1}^{T}\E\left[\delta_t^2\right]
	&\leq \frac{\E\left[\delta_1^2\right]}{\beta}+2\beta\sigma^2T+\sum_{t=1}^{T}\E\left[\frac{\beta^{4a-4}\|\z_t\|^{4/3}}{14}\right]\\
	&=\frac{\E\left[\delta_1^2\right]}{\beta}+2\beta\sigma^2T+\sum_{t=1}^{T}\E\left[\frac{\beta^{a}\|\z_t\|^{4/3}}{14}\right],
	\end{aligned}
	\end{equation}
	where the last equality holds since $a=4/3$.
	
	Taking expectation on both sides of~(\ref{fastrate:eq6}) and combining~(\ref{fastrate:eq8}), we have
	\begin{equation*}
	\begin{aligned}
	\sum_{t=1}^{T}\E\left[\|\z_t\|^{4/3}+\|\nabla F(\w_t)\|^{4/3}\right]\leq \frac{12L\Delta}{\beta^a}+\frac{14\E\left[\delta_1^2\right]}{\beta^{1+a}}+\frac{28\beta\sigma^2 T}{\beta^a}+\sum_{t=1}^{T}\E\left[\|\z_t\|^{4/3}\right]+4\beta^{2a}T.
	\end{aligned}
	\end{equation*}
	As a result, we have
	\begin{equation*}
	\frac{1}{T}\sum_{t=1}^{T}\E\left[\|\nabla F(\w_t)\|^{4/3}\right]\leq \frac{12L\Delta}{\beta^a T}+\frac{14\E\left[\delta_1^2\right]}{\beta^{1+a} T}+\frac{28\beta\sigma^2 }{\beta^a}+4\beta^{2a}.
	\end{equation*}
	Suppose initial batch size is $T_0$, the intermediate batch size is $m$, and $a=4/3$, then we have
	\begin{equation}
	\frac{1}{T}\sum_{t=1}^{T}\E\left[\|\nabla F(\w_t)\|^{4/3}\right]\leq \frac{12L\Delta}{\beta^{4/3} T}+\frac{14\sigma^2}{\beta^{7/3}T_0T}+\frac{28\sigma^2 }{\beta^{1/3} m}+4\beta^{8/3}.
	\end{equation}
	We can choose $\beta=O(\epsilon^{1/2})$, $T=O(\epsilon^{-2})$,  the initial batch size $T_0=1/\beta=O(\epsilon^{-1/2})$, the intermediate batch size as $m=1/\beta^3=O(\epsilon^{-3/2})$, which ends up with the total complexity $O(\epsilon^{-3.5})$.
\end{proof}

\section{A New Variant of Adam$^+$}
\label{sec:newvariant}
\begin{thm}
	Assume that $\|\nabla f(\w;\xi)\|\leq G$ almost surely for every $\w\in\R^d$. Choose $\eta_t=\frac{\alpha\beta^a}{\max\left(\|\z_t\|^{1/2},\epsilon_0\right)}$ with $a=4/3$, and we have
	\begin{equation*}
	\frac{1}{T}\sum_{t=1}^{T}\E\left[\|\nabla F(\w_t)\|^{3/2}\right]\leq \frac{12L\Delta}{\beta^a T}+\frac{14\E\left[\delta_1^2\right]}{\beta^{1+a} T}+\frac{28\beta\sigma^2 }{\beta^a}+4\beta^{3a}.
	\end{equation*}
	Denote the initial batch size and the intermediate batch size are $T_0$ and $m$ respectively, then we have
	\begin{equation*}
	\frac{1}{T}\sum_{t=1}^{T}\E\left[\|\nabla F(\w_t)\|^{3/2}\right]\leq \frac{12L\Delta}{\beta^a T}+\frac{14\sigma^2}{T_0 T\beta^{1+a}}+\frac{28\sigma^2 }{\beta^{a-1}m}+4\beta^{3a}.
	\end{equation*}
	
	To ensure that $\frac{1}{T}\sum_{t=1}^{T}\E\left[\|\nabla F(\w_t)\|^{3/2}\right]\leq \epsilon^{3/2}$, we choose $\beta=\epsilon^{3/8}$, $T=O(1/\epsilon^2)$, the initial batch size is $T_0=1/\epsilon^{3/8}$ and $m=1/\epsilon^{1.625}$, then the total computational complexity is $O(1/\epsilon^{3.625})$.
\end{thm}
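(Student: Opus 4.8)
The statement is the $p=\tfrac12$ counterpart of Theorem~\ref{thm3} (which handles $p=\tfrac23$), so the plan is to replay the proof of Theorem~\ref{thm3} with the normalization $\|\z_t\|^{2/3}$ replaced by $\|\z_t\|^{1/2}$ and every exponent adjusted accordingly; the one genuinely new ingredient is the almost-sure bound $\|\nabla f(\w;\xi)\|\le G$. First I would reparametrize $\eta_t=\alpha\gamma_t$ with $\gamma_t=\min\!\big(\beta^a/\|\z_t\|^{1/2},\ \beta^a/\epsilon_0\big)$ and $\epsilon_0=2\beta^a$, so that $\gamma_t\le\tfrac12$ and, using $\alpha\le 1/L$, $\eta_t\le 1/(2L)$. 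Just as in the proof of Theorem~\ref{thm3}, the $L$-smoothness descent inequality combined with a completion of squares that spends an auxiliary $\tfrac1{2L}\gamma_t\|\z_t\|^2$ gives
\[
F(\w_{t+1})\le F(\w_t)-\tfrac1{2L}\,\gamma_t\|\z_t\|^2+\tfrac1L\,\|\z_t-\nabla F(\w_t)\|^2 .
\]

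\textbf{Turning $\gamma_t\|\z_t\|^2$ into a bound on $\|\nabla F(\w_t)\|^{3/2}$.} Since $\gamma_t\|\z_t\|^2=\min\!\big(\beta^a\|\z_t\|^{3/2},\ \tfrac12\|\z_t\|^2\big)$, a short case split (if $2\beta^a\le\|\z_t\|^{1/2}$ the minimum is exactly $\beta^a\|\z_t\|^{3/2}$; otherwise $\|\z_t\|<4\beta^{2a}$, hence $\beta^a\|\z_t\|^{3/2}<8\beta^{4a}$ while the minimum is nonnegative) yields $\gamma_t\|\z_t\|^2\ge\beta^a\|\z_t\|^{3/2}-8\beta^{4a}$. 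Plugging this into the descent inequality, summing over $t=1,\dots,T$, dividing by $\beta^a$ and using $F(\w_1)-F_*\le\Delta$ gives, with $\Delta_t:=\z_t-\nabla F(\w_t)$,
\[
\sum_{t=1}^T\|\z_t\|^{3/2}\le\frac{2L\Delta}{\beta^a}+\frac{2}{\beta^a}\sum_{t=1}^T\|\Delta_t\|^2+8\beta^{3a}T .
\]
Combining $\|\nabla F(\w_t)\|^{3/2}\le 2\|\z_t\|^{3/2}+2\|\Delta_t\|^{3/2}$ with the Young inequality $\|\Delta_t\|^{3/2}\le\tfrac{3}{4\beta^a}\|\Delta_t\|^2+\tfrac14\beta^{3a}$ then converts this into a bound on $\sum_t\big(\|\z_t\|^{3/2}+\|\nabla F(\w_t)\|^{3/2}\big)$ in terms of $\Delta/\beta^a$, $\beta^{-a}\sum_t\|\Delta_t\|^2$ and $\beta^{3a}T$.

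\textbf{The variance recursion, the closing cancellation, and the final balance.} By Lemma~\ref{lemma:1}, $\E[\delta_{t+1}^2]\le(1-\tfrac\beta2)\E[\delta_t^2]+2\beta^2\sigma_m^2+\tfrac{CL_H^2}{\beta^3}\E[\eta_t^4\|\z_t\|^4]$. Here $\eta_t^4\|\z_t\|^4=\alpha^4\beta^{4a}\|\z_t\|^4/\max(\|\z_t\|^2,\epsilon_0^4)\le\alpha^4\beta^{4a}\|\z_t\|^2$, which — unlike the $p=\tfrac23$ case — leaves $\|\z_t\|^2$ rather than the target power $\|\z_t\|^{3/2}$, and this is exactly where the new assumption enters: $\z_t$ is a convex combination of stochastic gradients, each of norm $\le G$, so $\|\z_t\|\le G$ almost surely and hence $\|\z_t\|^2\le G^{1/2}\|\z_t\|^{3/2}$. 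This produces $\E[\delta_{t+1}^2]\le(1-\tfrac\beta2)\E[\delta_t^2]+2\beta^2\sigma_m^2+C'\beta^{4a-3}\E[\|\z_t\|^{3/2}]$ with $C'$ proportional to $L_H^2\alpha^4G^{1/2}$; telescoping and using the identity $4a-4=a$ (precisely the reason $a=\tfrac43$ is chosen) gives $\sum_t\E[\delta_t^2]\lesssim \E[\delta_1^2]/\beta+\beta\sigma_m^2T+\beta^a\sum_t\E[\|\z_t\|^{3/2}]$. Substituting back, the self-referential $\sum_t\|\z_t\|^{3/2}$ term is absorbed into the left-hand side provided $\alpha$ is small enough (a condition of the form $L_H^2\alpha^4G^{1/2}\lesssim 1$ on top of $\alpha\le 1/L$), and dividing by $T$ yields the first displayed bound; inserting $\E[\delta_1^2]\le\sigma_0^2=\sigma^2/T_0$ and $\sigma_m^2=\sigma^2/m$ gives the second. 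With $a=\tfrac43$ one has $\beta^a=\beta^{4/3}$, $\beta^{1+a}=\beta^{7/3}$, $\beta^{a-1}=\beta^{1/3}$ and $\beta^{3a}=\beta^4$, so taking $\beta=\epsilon^{3/8}$, $T=O(\epsilon^{-2})$, $T_0=O(\epsilon^{-3/8})$ and $m=O(\epsilon^{-13/8})$ makes each of the four terms $O(\epsilon^{3/2})$; hence $\tfrac1T\sum_t\E\|\nabla F(\w_t)\|^{3/2}\le\epsilon^{3/2}$, so for $\tau$ uniform on $\{1,\dots,T\}$ Jensen gives $\E\|\nabla F(\w_\tau)\|\le(\E\|\nabla F(\w_\tau)\|^{3/2})^{2/3}\le\epsilon$, while the total number of stochastic-gradient evaluations is $T_0+mT=O(\epsilon^{-3.625})$.

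\textbf{Main obstacle.} The only step that is not a mechanical transcription of the $p=\tfrac23$ proof is the variance recursion: with $p=\tfrac12$ the drift term $\eta_t^4\|\z_t\|^4$ collapses only to $\|\z_t\|^2$, one power of $\|\z_t\|$ heavier than the quantity $\|\nabla F\|^{3/2}$ being bounded, and no choice of normalization recovers it. Trading that surplus power for the constant $G^{1/2}$ via the almost-sure gradient bound is what lets the final cancellation close — and it is exactly why this variant needs the extra assumption $\|\nabla f(\w;\xi)\|\le G$ that Theorem~\ref{thm3} does not; the remaining work (the completion of squares, the Young-inequality conversions, and the exponent arithmetic) is routine.
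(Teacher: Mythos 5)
Your proposal follows essentially the same route as the paper's proof: the same reparametrization $\eta_t=\alpha\gamma_t$ and descent bound, the same conversion of $\gamma_t\|\z_t\|^2$ into $\beta^a\|\z_t\|^{3/2}$ up to a $\beta^{4a}$-order remainder, the same use of Lemma~\ref{lemma:1} together with the almost-sure bound $\|\z_t\|\leq G$ (hence $\|\z_t\|^2\leq G^{1/2}\|\z_t\|^{3/2}$) and the identity $4a-4=a$ for $a=4/3$, followed by the same absorption of $\sum_{t}\|\z_t\|^{3/2}$ and the same parameter balance giving $O(\epsilon^{-3.625})$. The only deviations are constant-factor bookkeeping (e.g., your case split yields $8\beta^{4a}$ where the paper's two-step estimate keeps the displayed constants $12L,14,28,4$), which do not affect correctness of the complexity claim.
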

\begin{proof}	
	Define $\gamma_t=\min\left(\frac{\beta^a}{\|\z_t\|^{1/2}}, \frac{\beta^a}{\epsilon_0}\right)$ with $\epsilon_0=2\beta^a$. Then we know that $\eta_t=\alpha\gamma_t$ and $\gamma_t\leq \frac{1}{2}$. Note that $\alpha\leq\frac{1}{L}$, so we have $\eta_t\leq \frac{1}{2L}$.
	By the $L$-smoothness of $F$, we have
	\begin{equation}
	\label{fastrate1:eq1}
	\begin{aligned}
	&\quad F(\w_{t+1})\leq F(\w_t)+\nabla^\top F(\w_t)(\w_{t+1}-\w_t)+\frac{L}{2}\left\|\w_{t+1}-\w_t\right\|^2\\
	&\leq F(\w_t)-\eta_t\nabla^\top F(\w_t)\z_t+\left(\frac{\eta_t^2L}{2}+\frac{\gamma_t}{2L}\right)\|\z_t\|^2-\frac{1}{2L}\gamma_t\|\z_t\|^2\\
	&= F(\w_t)-\eta_t\nabla^\top F(\w_t)\left(\z_t-\nabla F(\w_t)+\nabla F(\w_t)\right)+\left(\frac{\eta_t^2L}{2}+\frac{\gamma_t}{2L}\right)\|\z_t\|^2-\frac{1}{2L}\gamma_t\|\z_t\|^2\\
	&\stackrel{(a)}{\leq} F(\w_t)-\eta_t\nabla^\top F(\w_t)\left(\z_t-\nabla F(\w_t)+\nabla F(\w_t)\right)+\left(\eta_t^2L+\frac{\gamma_t}{L}\right)\left(\|\z_t-\nabla F(\w_t)\|^2+\left\|\nabla F(\w_t)\right\|^2\right)-\frac{1}{2L}\gamma_t\|\z_t\|^2\\
	&\stackrel{(b)}{\leq} F(\w_t)-\frac{\eta_t}{2}\left\|\nabla F(\w_t)\right\|^2 +\frac{\eta_t}{2}\|\z_t-\nabla F(\w_t)\|^2+\left(\eta_t^2L+\frac{\gamma_t}{L}\right)\left(\|\z_t-\nabla F(\w_t)\|^2+\left\|\nabla F(\w_t)\right\|^2\right)-\frac{1}{2L}\gamma_t\|\z_t\|^2\\
	&=F(\w_t)-\left(\frac{\eta_t}{2}-\eta_t^2 L-\frac{\gamma_t}{L}\right)\left\|\nabla F(\w_t)\right\|^2 +\left(\eta_t^2L+\frac{\gamma_t}{L}+\frac{\eta_t}{2}\right)\|\z_t-\nabla F(\w_t)\|^2-\frac{1}{2L}\gamma_t\|\z_t\|^2\\
	&\stackrel{(c)}{\leq} F(\w_t)-\frac{1}{2L}\gamma_t\|\z_t\|^2+\frac{1}{L}\left\|\z_t-\nabla F(\w_t)\right\|^2,
	\end{aligned}	
	\end{equation}
	where (a) holds since $\|\z_t\|^2\leq 2\|\z_t-\nabla F(\w_t)\|^2+2\|\nabla F(\w_t)\|^2$, (b) holds since $-\nabla^\top F(\w_t)\z_t\leq \frac{1}{2}\left(\|\nabla F(\w_t)\|^2+\|\z_t-\nabla F(\w_t)\|^2\right)$, (c) holds due to $\frac{\eta_t}{2}-\eta_t^2 L-\frac{\gamma_t}{L}\geq 0$ (since $\eta_t\leq\frac{1}{2L}$, we have $\frac{\eta_t}{2}-\eta_t^2 L\geq \frac{1}{2L}$ and note that $\frac{\gamma_t}{L}\leq \frac{1}{2L}$). 
	
	By the definition of $\gamma_t$, we have
	\begin{equation}
	\label{fastrate1:eq2}
	\begin{aligned}
	\gamma_t\|\z_t\|^2&\geq \beta^{2a}\|\z_t\|\min\left(\frac{\|\z_t\|^{1/2}}{\beta^a}, \frac{\|\z_t\|}{\beta^a\epsilon_0}\right)=\beta^{2a}\|\z_t\|\min\left(\frac{\|\z_t\|^{1/2}}{\beta^a}, \frac{\|\z_t\|}{2\beta^{2a}}\right)\\
	&\stackrel{(a)}{\geq} \beta^{2a}\|\z_t\|\left(\frac{\|\z_t\|^{1/2}}{\beta^a}-\frac{1}{2}\right)=\beta^a\|\z_t\|^{3/2}-\frac{\beta^{2a}\|\z_t\|}{2},
	\end{aligned}	
	\end{equation} 
	where (a) holds since $x\geq x-\frac{1}{2}$, $\frac{x^2}{2}\geq x-\frac{1}{2}$ hold for any $x$ and let $x=\frac{\|\z_t\|^{1/2}}{\beta^a}$.
	
	Combining~(\ref{fastrate1:eq1}) and~(\ref{fastrate1:eq2}), we have
	\begin{equation*}
	\begin{aligned}
	\beta^a\|\z_t\|^{3/2}&\leq \gamma_t\|\z_t\|^2+\frac{\beta^{2a}\|\z_t\|}{2}
	\leq 2L\left(F(\w_t)-F(\w_{t+1})\right)+\frac{\beta^{2a}\|\z_t\|}{2}+2\|\z_t-\nabla F(\w_t)\|^2\\
	&=2L\left(F(\w_t)-F(\w_{t+1})\right)+\beta^{a}\|\z_t\|^{3/2}\cdot\frac{\beta^a}{2\|\z_t\|^{1/2}}+2\|\z_t-\nabla F(\w_t)\|^2.
	\end{aligned}
	\end{equation*}
	If $\frac{\beta^{a}}{2\|\z_t\|^{1/2}}\leq \frac{1}{2}$, we have $	\beta^a\|\z_t\|^{4/3}\leq 4L\left(F(\w_t)-F(\w_{t+1})\right)+4\|\z_t-\nabla F(\w_t)\|^2.$
	If $\frac{\beta^{a}}{2\|\z_t\|^{1/2}}> \frac{1}{2}$, then $\beta^a>\|\z_t\|^{1/2}$, and hence we have $\beta^a\|\z_t\|^{3/2}\leq \beta^{4a}$. As a result, we have
	\begin{equation}
	\label{fastrate1:eq3}
	\begin{aligned}
	\beta^a\|\z_t\|^{3/2}\leq  4L\left(F(\w_t)-F(\w_{t+1})\right)+4\|\z_t-\nabla F(\w_t)\|^2+\beta^{4a}.
	\end{aligned}
	\end{equation}
	Taking summation on both sides of~(\ref{fastrate1:eq3}) over $t=1,\ldots,T$ yields
	\begin{equation}
	\label{fastrate1:eq4}
	\sum_{t=1}^{T}\|\z_t\|^{3/2}\leq  4L\sum_{t=1}^{T}\frac{F(\w_t)-F(\w_{t+1})}{\beta^a}+\sum_{t=1}^{T}\frac{4}{\beta^a}\|\z_t-\nabla F(\w_t)\|^2+\beta^{3a}T.
	\end{equation}
	Define $\Delta_t=\z_t-\nabla F(\w_t)$, then we have
	\begin{equation}
	\label{fastrate1:eq5}
	\|\nabla F(\w_t)\|^{3/2}\leq 2\|\z_t\|^{3/2}+2\|\Delta_t\|^{3/2}.
	\end{equation}
	Hence, we have
	\begin{equation}
	\label{fastrate1:eq6}
	\begin{aligned}
	&\quad\sum_{t=1}^{T}\|\z_t\|^{3/2}+\|\nabla F(\w_t)\|^{3/2} \\
	&\stackrel{(a)}{\leq }2\sum_{t=1}^{T}\|\Delta_t\|^{3/2}+12L\sum_{t=1}^{T}\frac{F(\w_t)-F(\w_{t+1})}{\beta^a}+\sum_{t=1}^{T}\frac{12}{\beta^a}\|\z_t-\nabla F(\w_t)\|^2+3\beta^{3a}T\\
	&\stackrel{(b)}{\leq} \sum_{t=1}^{T}\frac{3}{2}\left(\frac{\|\Delta_t\|^2}{\beta^a}+\frac{\beta^{3a}}{3}\right)+12L\sum_{t=1}^{T}\frac{F(\w_t)-F(\w_{t+1})}{\beta^a}+\sum_{t=1}^{T}\frac{12}{\beta^a}\|\z_t-\nabla F(\w_t)\|^2+3\beta^{3a}T\\
	&\leq 12L\sum_{t=1}^{T}\frac{F(\w_t)-F(\w_{t+1})}{\beta^a}+\sum_{t=1}^{T}\frac{14}{\beta^a}\|\z_t-\nabla F(\w_t)\|^2+4\beta^{3a}T,
	\end{aligned}
	\end{equation}
	where (a) holds due to~(\ref{fastrate1:eq4}) and~(\ref{fastrate1:eq5}), (b) holds because $\min_{x>0}\frac{c^2}{x}+\frac{x^3}{3}=\frac{4c^{3/2}}{3}$.
	
	By Lemma~\ref{lemma:1}, we know that
	\begin{equation*}
	\begin{aligned}
	\E\left[\delta_{t+1}^2\right] &\leq \left(1- \frac{\beta}{2}\right)\E\left[\delta_t^2\right] +2\beta^2 \sigma^2 + \E\left[\frac{CL_H^2\eta_t^4\|\z_t\|^4}{\beta^3}\right]\\
	&\stackrel{(a)}{\leq}\left(1- \frac{\beta}{2}\right)\E\left[\delta_t^2\right] +2\beta^2 \sigma^2 + \E\left[\frac{CL_H^2\alpha^4\beta^{4a}\|\z_t\|^4}{\max(\|\z_t\|^{2},\epsilon_0^4)\beta^3}\right]\\
	&\leq \left(1- \frac{\beta}{2}\right)\E\left[\delta_t^2\right] +2\beta^2 \sigma^2 + \E\left[CL_H^2\alpha^4\beta^{4a-3}\|\z_t\|^{2}\right].
	\end{aligned}
	\end{equation*}
	Note that $CL_H^2\alpha^4\leq \frac{1}{14G^{1/2}}$, we have
	\begin{equation}
	\label{fastrate1:eq7}
	\frac{\beta}{2}\E\left[\delta_t^2\right]\leq \E\left[\delta_t^2-\delta_{t+1}^2\right] +2\beta^2 \sigma^2 + \E\left[\frac{\beta^{4a-3}\|\z_t\|^{2}}{14G^{1/2}}\right].
	\end{equation}
	Taking summation on both sides of~(\ref{fastrate1:eq7}) over $t=1,\ldots,T$, we have
	\begin{equation}
	\label{fastrate1:eq8}
	\begin{aligned}
	\sum_{t=1}^{T}\E\left[\delta_t^2\right]&\leq \frac{\E\left[\delta_1^2\right]}{\beta}+2\beta\sigma^2T+\sum_{t=1}^{T}\E\left[\frac{\beta^{4a-4}\|\z_t\|^{2}}{14}\right] \\
	&=\frac{\E\left[\delta_1^2\right]}{\beta}+2\beta\sigma^2T+\sum_{t=1}^{T}\E\left[\frac{\beta^{a}\|\z_t\|^{2}}{14G^{1/2}}\right]\\
	&\leq \frac{\E\left[\delta_1^2\right]}{\beta}+2\beta\sigma^2T+\sum_{t=1}^{T}\E\left[\frac{\beta^{a}\|\z_t\|^{3/2}}{14}\right],
	\end{aligned}
	\end{equation}
	where the equality holds since $a=4/3$ and last inequality holds since $\|\z_t\|\leq G$.
	
	Taking expectation on both sides of~(\ref{fastrate1:eq6}) and combining~(\ref{fastrate1:eq8}), we have
	\begin{equation*}
	\begin{aligned}
	\sum_{t=1}^{T}&\E\left[\|\z_t\|^{3/2}+\|\nabla F(\w_t)\|^{3/2}\right] \\
	&\leq \frac{12L(F(\w_1)-F_*)}{\beta^a}+\frac{14\E\left[\delta_1^2\right]}{\beta^{1+a}}+\frac{28\beta\sigma^2 T}{\beta^a}+\sum_{t=1}^{T}\E\left[\|\z_t\|^{3/2}\right]+4\beta^{3a}T.
	\end{aligned}
	\end{equation*}
	As a result, we have
	\begin{equation*}
	\frac{1}{T}\sum_{t=1}^{T}\E\left[\|\nabla F(\w_t)\|^{3/2}\right]\leq \frac{12L(F(\w_1)-F_*)}{\beta^a T}+\frac{14\E\left[\delta_1^2\right]}{\beta^{1+a}T}+\frac{28\beta\sigma^2 }{\beta^a}+4\beta^{3a}. \qedhere
	\end{equation*}
\end{proof}

\section{Related Work}
\label{sec:relatedwork}
\paragraph{Adaptive Gradient Methods} Adaptive gradient methods were first proposed in the framework of online convex optimization~\citep{duchi2011adaptive,mcmahan2010adaptive}, which dynamically incorporate knowledge of the geometry of the data to perform more informative gradient-based learning. This type of algorithm was proved to have fast convergence if stochastic gradients are sparse~\citep{duchi2011adaptive}. Based on this idea, several other adaptive algorithms were proposed to train deep neural networks, including Adam~\citep{kingma2014adam}, Amsgrad~\citep{reddi2019convergence}, RMSprop~\citep{tieleman2012lecture}. There are many work trying to analyze variants of adaptive gradient methods in both convex and nonconvex case~\citep{chen2018closing,chen2018universal,chen2018convergence,luo2019adaptive,chen2018closing,chen2018convergence,ward2019adagrad,li2019convergence,chen2018universal}. 
Notably, all of these works are able to establish faster convergence rate than SGD, based on the assumption that stochastic gradients are sparse. However, this assumption may not hold in deep learning. In contrast, our algorithm can have faster convergence than SGD even if stochastic gradients are not sparse, since our algorithm's new data-dependent adaptive complexity does not rely on the sparsity of stochastic gradients.

\paragraph{Variance Reduction Methods} Variance reduction is a technique to achieve fast rates for finite sum and stochastic optimization problems. It was first proposed for finite-sum convex optimization~\citep{johnson2013accelerating} and then it was extended in finite-sum nonconvex~\citep{allen2016variance,reddi2016stochastic,zhou2018stochastic} and stochastic nonconvex~\citep{lei2017non,fang2018spider,wang2019spiderboost,pham2020proxsarah,cutkosky2019momentum} optimization. To prove faster convergence rate than SGD, all these works make the assumption that the objective function is an average of individual functions and each one of them is smooth. In contrast, our analysis does not require such an assumption and to achieve a faster-than-SGD rate.
\paragraph{Other Related Work} ~\citet{arjevani2019lower} show that SGD is optimal for stochastic nonconvex smooth optimization, if one does not assume that every component function is smooth. There are recent work trying to establish faster rate than SGD, when the Hessian of the objective function is Lipschitz~\citep{fang2019sharp,cutkosky2020momentum}. There are several empirical papers, including LARS~\citep{you2017scaling} and LAMB~\citep{you2019large}), which utilize both moving average and normalization for training of deep neural networks with large-batch sizes. \cite{zhang2020complexity} consider an algorithm for finding stationary point for nonconvex nonsmooth problems.~\cite{levy2017online} considers convex optimization setting and design algorithms which adapts to the smoothness parameter.~\cite{liu2019variance} introduced Rectified Adam to alleviate large variance at the early stage.
However, none of them establish data-dependent adaptive complexity as in our paper.

\end{document}